\declaretheorem[numberwithin=section]{thm}
\declaretheorem[sibling=thm]{lemma}
\declaretheorem[]{assumption}
\declaretheorem[]{definition}
\DeclareRobustCommand{\eg}{e.g.,\@\xspace}
\DeclareRobustCommand{\ie}{i.e.,\@\xspace}
\DeclareRobustCommand{\wrt}{w.r.t.\@\xspace}
\newcommand{\Aspace}{\mathcal{A}}
\newcommand{\Sspace}{\mathcal{S}}
\DeclareMathOperator{\EV}{\mathbb{E}}
\newcommand{\G}{\mathcal{G}}
\newcommand{\class}{\mathbb{M}}
\newcommand{\Pg}{P_\G}
\newcommand{\ind}{\mathbb{I}}
\newcommand{\mdp}{\mathcal{M}}
\DeclareMathOperator{\Pa}{Pa}
\newcommand{\one}{\mathds{1}}
\newcommand{\uni}{\mathbb{U}}
\newcommand{\V}{\mathcal{V}}
\title{Provably Efficient Causal Model-Based Reinforcement Learning for \\ Systematic Generalization}
\author {
    % Authors
    Mirco Mutti\equalcontrib \textsuperscript{\rm 1 2},
    Riccardo De Santi\equalcontrib \textsuperscript{\rm 3},
    Emanuele Rossi\textsuperscript{\rm 4 5},
    Juan Felipe Calderon\textsuperscript{\rm 1}, \\
    Michael Bronstein\textsuperscript{\rm 5 6},
    Marcello Restelli\textsuperscript{\rm 1}
}
\begin{document}

\maketitle

\begin{abstract}
In the sequential decision making setting, an agent aims to achieve \emph{systematic generalization} over a large, possibly infinite, set of environments. Such environments are modeled as discrete Markov decision processes with both states and actions represented through a feature vector. The underlying structure of the environments allows the transition dynamics to be factored into two components: one that is environment-specific and another that is shared.
Consider a set of environments that share the laws of motion as an example. In this setting, the agent can take a finite amount of \emph{reward-free} interactions from a subset of these environments. The agent then must be able to \emph{approximately} solve any planning task defined over any environment in the original set, relying on the above interactions only. Can we design a provably efficient algorithm that achieves this ambitious goal of systematic generalization?
In this paper, we give a partially positive answer to this question. First, we provide a tractable formulation of systematic generalization by employing a \emph{causal} viewpoint. Then, under specific structural assumptions, we provide a simple learning algorithm that guarantees any desired planning error up to an unavoidable sub-optimality term, while showcasing a polynomial sample complexity.
\end{abstract}

\section{Introduction}
Whereas recent breakthroughs have established Reinforcement Learning~\citep[RL,][]{sutton2018reinforcement} as a powerful tool to address a wide range of sequential decision making problems, the curse of generalization~\cite{kirk2021survey} is still a main limitation of commonly used techniques.
RL algorithms deployed on a given task are usually effective in discovering the correlation between an agent's behavior and the resulting performance from large amounts of labeled samples~\cite{jaksch2010near, lange2012batch}.
However, those algorithms are usually unable to discover basic cause-effect relations between the agent's behavior and the environment dynamics.
Crucially, the aforementioned correlations are oftentimes specific to the task at hand, and they are unlikely to be of any use for addressing different tasks or environments. Instead, some universal causal relations generalize over the environments, and once learned they can be exploited for solving any task. 
Let us consider as an illustrative example an agent interacting with a large set of physical environments. While each of these environments can have its specific dynamics, we expect the basic laws of motion to hold across the environments, as they encode general causal relations. 
Once they are learned, there is no need to discover them again from scratch when facing a new task, or an unseen environment. 
Even if the dynamics over these relations can change, such as moving underwater is different than moving in the air, or the gravity can change from planet to planet, the underlying causal structure still holds.
This knowledge alone often allows the agent to solve new tasks in unseen environments by taking a few, or even zero, interactions.
\begin{figure*}[ht!]
\centering \includegraphics[]{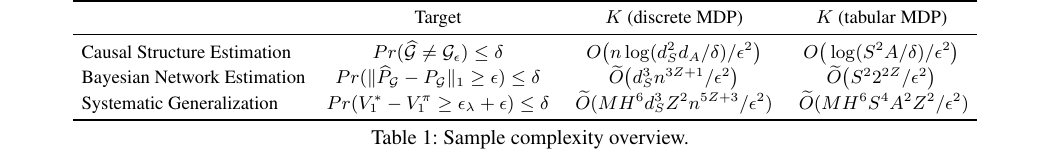}
\end{figure*}

We argue that we should pursue this kind of generalization in RL, which we call \emph{systematic generalization}, where learning universal causal relations from interactions with a few environments allows us to approximately solve any task in any other environment without further interactions.
Although this problem setting might seem overly ambitious or even far-fetched, in this paper we provide the first tractable formulation of systematic generalization, thanks to a set of structural assumptions that are motivated by a causal viewpoint.
The problem formulation is partially inspired by reward-free RL~\cite{jin2020reward}, in which the agent can take unlabelled interactions with an environment to learn a model that allows approximate planning for any reward function.
Here, we extend this formulation to a large, potentially infinite, set of reward-free environments, or a \emph{universe}, the agent can freely interact with.
We consider discrete environments, such that both their states and actions can be described through vectors of discrete features. Crucially, these environments share a common causal structure that explains a significant portion, but not all, of their transition dynamics.
Can we design a provably efficient algorithm that guarantees an arbitrarily small planning error for any possible task that can be defined over the set of environments, by taking reward-free interactions with a generative model?

In this paper, we provide a partially positive answer to this question by presenting a simple but principled causal model-based approach (see Figure~\ref{fig:pipeline}). This algorithm interacts with a finite subset of the universe to learn the causal structure underlying the set of environments in the form of a causal dependency graph $\G$. Then, the causal transition model, which encodes the dynamics that is common across the environment, is obtained by estimating the Bayesian network $P_{\G}$ over $\G$ from a mixture of the environments. Finally, the causal transition model is employed by a planning oracle to provide an approximately optimal policy for an unknown environment and a given reward function. We can show that this simple recipe, with a sample complexity that is polynomial in all the relevant quantities, allows achieving any desired planning error up to an unavoidable error term. The latter is inherent to the setting, which demands generalization over an infinite set of environments, and cannot be overcome without additional samples from the test environment.

The contributions of this paper include:
\begin{itemize}[noitemsep,topsep=0pt,parsep=0pt,partopsep=0pt,leftmargin=21pt]
    \item[(c1)] The first tractable formulation of the systematic generalization problem in RL, thanks to structural assumptions motivated by causal considerations (\S~\ref{sec:problem});
    \item[(c2)] A provably efficient algorithm to learn systematic generalization over an infinite set of environments (\S~\ref{sec:complexity_generalization});
    \item[(c3)] The sample complexity of estimating the causal structure underlying a discrete MDP (\S~\ref{sec:complexity_cuasal_discovery});
    \item[(c4)] The sample complexity of estimating the Bayesian network underlying a discrete MDP (\S~\ref{sec:complexity_bayesian_network});
    \item[(c5)] A brief numerical validation of the main results (\S~\ref{sec:experiments}).
\end{itemize}
On a technical level, (c3, c4) require the adaptation of known results in causal discovery~\cite{wadhwa2021sample} and Bayesian network estimation~\cite{dasgupta1997sample} to the specific MDP setting, which are then employed as building blocks to obtain the rate for systematic generalization (c2). See Table~1 for a summary of the main sample complexity results.

With this work we aim to connect several active research areas on model-based RL \cite{sutton2018reinforcement}, reward-free RL \cite{jin2020reward}, causal RL \cite{zhang2020invariant}, factored MDPs \cite{rosenberg2021oracle}, independence testing \cite{canonne2018testing}, experimental design \cite{ghassami2018budgeted} in a general framework where individual progresses can be enhanced beyond the sum of their parts.

\begin{figure*}[t]
    \centering
    \includegraphics[scale=0.22]{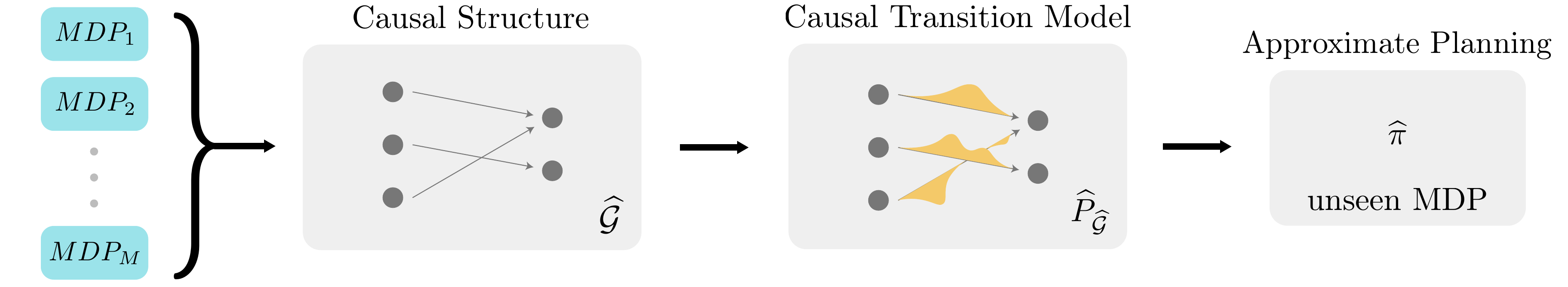}
    \caption{High-level illustration of the causal model-based approach to systematic generalization.}
    \label{fig:pipeline}
\end{figure*}

\section{Preliminaries}
\label{sec:preliminaries}
We start with some notions about graphs, causality, and Markov decision processes for later use. 
We denote a set of integers $\{1, \ldots, a \}$ as $[a]$, and the probability simplex over the space $\mathcal{A}$ as $\Delta_{\mathcal{A}}$. For a factored space $\mathcal{A} = \mathcal{A}_1 \times \ldots \times \mathcal{A}_a$ and a set of indices $Z \subseteq [a]$, which we call a \emph{scope}, we denote the scope operator as $\mathcal{A} [Z] := \bigotimes_{i \in Z} \mathcal{A}_i$, in which $\bigotimes$ is a cardinal product.
For any $A \in \mathcal{A}$, we denote with $A[Z]$ the vector $(A_i)_{i \in Z}$. For singletons we write $A[i]$ as a shorthand for $A[\{i\}]$.
Given  two probability measures $P$ and  $Q$ over a discrete space $\mathcal{A}$, their $L_1$-distance is  $\| P - Q \|_1 = \sum_{A \in \mathcal{A}} | P(A) - Q(A) |$, and their Kullback-Leibler (KL) divergence is $d_{KL} (P || Q) = \sum_{A \in \mathcal{A}} P(A) \log (P(A) / Q(A))$. 

\textbf{Graphs}~~~
We define a graph $\G$ as a pair $\G := (\V, E)$, where $\V$ is a set of nodes and $E \subseteq N\times N$ is a set of edges between them. We call $\G$ a \emph{directed graph} if all of its edges $E$ are directed (i.e., ordered pairs of nodes). We also define the in-degree of a node to be its number of incoming edges: $\mathrm{degree_{in}}(A)=|\{(B, A) : (B, A) \in E, \forall B\}|$. $\G$ is said to be a \emph{Directed Acyclic Graph} (DAG) if it is a directed graph without cycles. We call $\G$ a \emph{bipartite graph} if there exists a partition $X \cup Y = \V$ such that none of the nodes in $X$ and $Y$ are connected by an edge, \ie $E \cap (X \times X)  =  E \cap  (Y \times Y) = \emptyset$. For any subset of nodes $S \subset \V$, we define the \emph{subgraph} induced by $S$ as $\G [S] := (S, E[S])$, in which $E[S] = E \cap (S \times S)$. 
The \emph{skeleton} of a graph $\G$ is the undirected graph that is obtained from $\G$ by replacing all the directed edges in $E$ with undirected ones.
 Finally, the \emph{graph edit distance} between two graphs is the minimum number of graph edits (addition or deletion of either a node or an edge) necessary to transform one graph into the other.

\textbf{Causal Graphs and Bayesian Networks}~~~
For a set $\mathcal{X}$ of random variables, we represent the causal structure over $\mathcal{X}$ with a DAG $\G_{\mathcal{X}} = (\mathcal{X}, E)$,\footnote{We will omit the subscript $\mathcal{X}$ whenever clear from the context.} 
which we call the \emph{causal graph} of $\mathcal{X}$.
For each pair of variables $A, B \in \mathcal{X}$, a directed edge $(A, B) \in \G_{\mathcal{X}}$ denotes that $B$ is conditionally dependent on $A$.
For every variable $A \in \mathcal{X}$, we denote as $\Pa(A)$ the \emph{causal parents} of $A$, \ie the set of all the variables $B \in \mathcal{X}$ on which $A$ is conditionally dependent, $(B, A) \in \G_\mathcal{X}$. 
%An intervention can be represented through the do-operator~\cite{pearl2009causal} as $do(\mathcal{X} [I] = \mathbf{x})$ where $\mathcal{X} [I] \subseteq \mathcal{X}$ for a set of indices $I$, and $\mathbf{x}$ is a $|I|$-dimensional vector of assignments. For each variable $X_i \in \mathcal{X} [I]$ the intervention removes all the edges from $\Pa(X_i)$ to $X_i$ in $\G_{\mathcal{X}}$, thus breaking the conditional dependencies on $X_i$ and forcing $X_i = \mathbf{x}_i$.
A Bayesian network~\cite{dean1989model} over the set $\mathcal{X}$ is defined as $\mathcal{N} := (\G_{\mathcal{X}}, P)$, where $\G_\mathcal{X}$ specifies the \emph{structure} of the network, \ie the dependencies between the variables in $\mathcal{X}$, and the distribution $P : \mathcal{X} \to \Delta_{\mathcal{X}}$ specifies the conditional probabilities of the variables in $\mathcal{X}$, such that
$
    P(\mathcal{X}) = \prod_{X_i \in \mathcal{X}} P_i (X_i | \Pa (X_i) ).
$

\textbf{Markov Decision Processes}~~~
A \emph{tabular} episodic Markov Decision Process~\citep[MDP,][]{puterman2014markov} is defined as $\mdp := (\Sspace, \Aspace, P, H, r)$, where $\Sspace$ is a set of $|\Sspace| = S$ states, $\Aspace$ is a set of $|\Aspace| = A$ actions, $P$ is a transition model such that $P(s'|s, a)$ gives the conditional probability of the next state $s'$ having taken action $a$ in state $s$, $H$ is the episode horizon, $r : \Sspace \times \Aspace \to [0, 1]$ is a deterministic reward function.

The strategy of an agent interacting with $\mdp$ is represented by a non-stationary, stochastic \emph{policy}, a collection of functions $(\pi_h : \Sspace \to \Delta_{\Aspace})_{h \in [H]}$ where $\pi_h (a|s)$ denotes the conditional probability of taking action $a$ in state $s$ at step $h$. The \emph{value function} $V^\pi_h : \Sspace \to \mathbb{R}$ associated to $\pi$ is defined as the expected sum of the rewards that will be collected, under the policy $\pi$, starting from $s$ at step $h$, \ie
\begin{equation*}
    V^\pi_h (s) := \EV_{\pi} \bigg[ \sum_{h' = h}^H r (s_{h'}, a_{h'}) \ \Big| \ s_{h} = s \bigg].
\end{equation*}
For later convenience, we further define $P V^\pi_{h + 1} (s, a) := \EV_{s' \sim P(\cdot | s, a)} [ V_{h + 1}^{\pi} (s')]$ and $V_1^\pi := \EV_{s \sim P} [ V^\pi_1 (s)]$. We will write $V^\pi_{\mdp, r}$ to denote $V^\pi_1$ in the MDP $\mdp$ with reward function $r$ (if not obvious from the context).  
For an MDP $\mdp$ with finite states, actions, and horizon, there always exists an \emph{optimal policy} $\pi^*$ that gives the value $V^*_h (s) = \sup_{\pi} V^\pi_h (s)$ for every $s, a, h$. The goal of the agent is to find a policy $\pi$ that is $\epsilon$-close to the optimal one, \ie $ V^*_1 - V^\pi_1 \leq \epsilon.$

Finally, we define a \emph{discrete} Markov decision process as $\mdp := ((\Sspace, d_S, n), (\Aspace, d_A, n), P, H, r)$, where $\Sspace, \Aspace, P, H, r$ are specified as before, and where the states and actions spaces admit additional structure, such that every $s \in \Sspace$ can be represented through a $d_S$-dimensional  vector of discrete features taking value in $[n]$, and every $a \in \Aspace$ can be represented through a $d_A$-dimensional vector of discrete features taking value in $[n]$. Note that any tabular MDP can be formulated under this alternative formalism through one-hot encoding by taking $n = 2$, $d_S = S$, and $d_A = A$.

\section{Problem Formulation}
\label{sec:problem}
In our setting, a learning agent aims to master a large, potentially infinite, set $\uni$ of environments modeled as discrete MDPs without rewards that we call a \emph{universe}
\begin{equation*}
    \uni := \big\{ \mdp_i = ((\Sspace, d_S, n), (\Aspace, d_A, n), P_i, \mu) \big\}_{i = 1}^\infty.
\end{equation*}
The agent can draw a finite amount of experience by interacting with the MDPs in $\uni$. From these interactions alone, the agent aims to acquire sufficient knowledge to approximately solve any task that can be specified over the universe $\uni$. A \emph{task} is defined as any pairing of an MDP $\mdp \in \uni$ and a reward function $r$, whereas \emph{solving it} refers to providing a slightly sub-optimal policy via planning, \ie without taking additional interactions. We call this problem \emph{systematic generalization}, which we can formalize as follows.
\begin{definition}[Systematic Generalization]
    \label{def:systematic_generalization}
    For any unknown MDP $\mdp \in \uni$ and any given reward function $r: \Sspace \times \Aspace \to [0, 1]$, the {\em systematic generalization} problem requires the agent to provide a policy $\pi$, such that
    $
        V^*_{\mdp, r}  - V^{\pi}_{\mdp, r} \leq \epsilon
    $
    up to any desired sub-optimality $\epsilon > 0$.
\end{definition}
Since the set $\uni$ is infinite, we clearly require additional structure to make the problem feasible. On the one hand, the state space $(\Sspace, d_S, n)$, action space $(\Aspace, d_A, n)$, and initial state distribution $\mu$ are shared across $\mdp \in \uni$. The transition dynamics $P_i$ is instead specific to each MDP $\mdp_i \in \uni$. However, we assume the presence of a {\em common causal structure} that underlies the transition dynamics of the universe, and relates the single transition models $P_i$.

\subsection{Causal Structure of the Transition Dynamics}
The transition dynamics of a discrete MDP gives the conditional probability of next state features $s'$ given the current state-action features $(s, a)$. To ease the notation, from now on we will denote the state-action features with a random vector $X = (X_i)_{i \in [d_S + d_A]}$, in which each $X_i$ is supported in $[n]$, and the next state features with a random vector $Y = (Y_i)_{i \in [d_S]}$, in which each $Y_i$ is supported in $[n]$.

For each environment $\mdp_i \in \uni$, the conditional dependencies between the next state features $Y$ and the current state-action features $X$ are represented through a bipartite dependency graph $\G_i$, such that $(X[z], Y[j]) \in \G_i$ if and only if $Y[j]$ is conditionally dependent on $X[z]$.
Clearly, each environment can display its own dependencies, but we assume there is a set of dependencies that represent general causal relationships between the features, and that appear in any $\mdp_i \in \uni$. In particular, we call the intersection $\G := \cap_{i = 0}^\infty \G_i$ the \emph{causal structure} of $\uni$, which is the set of conditional dependencies that are common across the universe. In Figure~\ref{fig:causal_transition_model}, we show an illustration of such a causal structure.
Since it represents universal causal relationships, the causal structure $\G$ is time-consistent, \ie $\G^{(h)} = \G^{(1)}$ for any step $h \in [H]$, and we further assume that $\G$ is sparse, which means that the number of features $X[z]$ on which a feature $Y[j]$ is dependent on is bounded from above.
\begin{assumption}[$Z$-sparseness]
\label{ass:sparseness}
    The causal structure $\G$ is {\em $Z$-sparse} if  $\max_{j \in [d_S]} \mathrm{degree_{in}}(Y[j]) \leq Z$.
\end{assumption}

Given a causal structure $\G$, without loosing generality\footnote{Note that one can always take $P_\G (Y | Z) = 1, \forall (X, Y)$.} we can express each transition model $P_i$ as
$
    P_i (Y | X) = \Pg (Y | X) F_i (Y | X),
$
in which $\Pg$ is the Bayesian network over the causal structure $\G$, whereas $F_i$ includes environment-specific factors.\footnote{The parameters in $F_i$ are numerical values such that $P_i$ remains a well-defined probability measure.} Since it represents the conditional probabilities due to universal causal relations in $\uni$, we call $P_\G$ the \emph{causal transition model} of $\uni$. Thanks to the structure $\G$, $P_\G$ can be further factored as
\begin{equation}
    \label{eq:factorization}
    \Pg (Y | X) = \prod_{j = 1}^{d_S} P_j (Y[j] | X[Z_j] ),
\end{equation}
where the scopes $Z_j$ are the the causal parents of $Y[j]$, \ie $(X[z], Y[j]) \in \G, \forall z \in Z_j$. In Figure~\ref{fig:bayesian_network}, we show an illustration of the causal transition model and its factorization.
Similarly to the underlying structure $\G$, the causal transition model $P_\G$ is also time-consistent, \ie $P_\G^{(h)} = P_{\G}^{(1)}$ for any step $h \in [H]$. In this work, we assume that the causal transition model is non-vacuous and that it explains a significant part of the transition dynamics of $\mdp_i \in \uni$.
\begin{assumption}[$\lambda$-sufficiency]
    \label{ass:sufficiency}
    Let $\lambda \in [0, 1]$ be a constant. The causal transition model $P_{\G}$ is {\em causally $\lambda$-sufficient} if 
    $
        \sup_{X} \| P_{\G} (\cdot | X) - P_i (\cdot | X) \|_1 \leq \lambda, \ \forall P_i \in \mdp_i \in \uni.
    $
\end{assumption}
The  parameter $\lambda$ controls the amount of the transition dynamics that is due to the universal causal relations $\G$ ($\lambda = 0$ means that $P_\G$ is sufficient to explain the transition dynamics of any $\mdp_i \in \uni$, whereas $\lambda = 1$ implies no shared structure). In this paper, we argue that learning the causal transition model $P_\G$ is a good target for systematic generalization and we provide theoretical support for this claim in \S~\ref{sec:complexity}.
\begin{figure}[t!]
    \centering
    \includegraphics[width=0.85\linewidth]{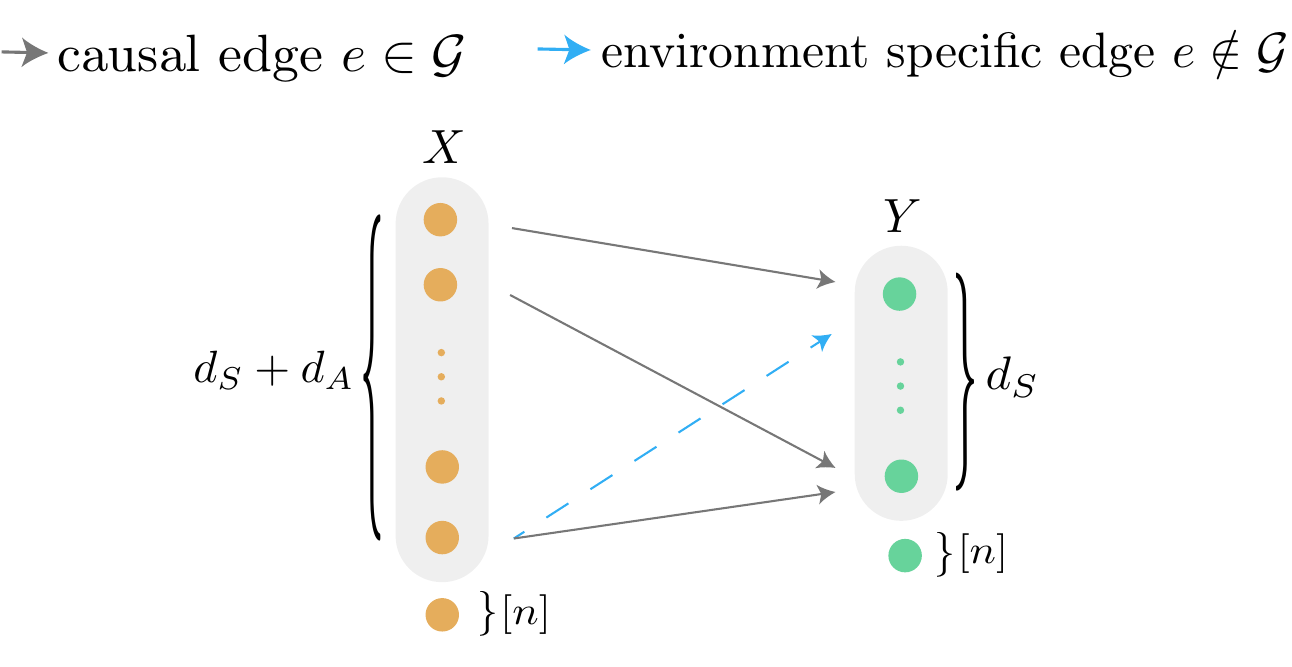}
    \caption{Causal structure $\G$ of $\uni$.}
    \label{fig:causal_transition_model}
\end{figure}

\subsection{A Class of Training Environments}

Even if the universe $\uni$ admits the structure that we presented in the last section, it is still an infinite set. Instead, the agent can only interact with a finite subset of discrete MDPs
\begin{equation*}
    \class := \{ \mdp_i = ((\Sspace, d_S, n), (\Aspace, d_A, n), P_i, \mu)\}_{i = 1}^M \subset \uni,
\end{equation*}
which we call a \emph{class} of size $M$. Crucially, the causal structure $\G$ is a property of the full set $\uni$, and if we aim to infer it from interactions with a finite class $\class$, we have to assume that $\class$ is informative enough on the structure of $\uni$.
\begin{assumption}[Diversity]
    \label{ass:diversity}
    Let $\class \subset \uni$ be class of size $M$. We say that $\class$ is {\em causally diverse} if
    $\G = \cap_{i = 1}^{M} \G_i = \cap_{i = 1}^{\infty} \G_i.$\footnote{W.l.o.g., we assume that the indices $i \in [M]$ refers to the $\mdp_i \in \class$, and $i \in (M, \infty)$ to the $\mdp_i \in \uni \setminus \class$.}
\end{assumption}
Analogously, if we aim to infer the causal transition model $P_\G$ from interactions with the transition models $P_i$ of the single MDPs $\mdp_i \in \class$, we have to assume that $\class$ is balanced in terms of the conditional probabilities displayed by its components, so that the factors that do not represent universal causal relations even out while learning.
\begin{assumption}[Evenness]
    \label{ass:evenness}
    Let $\class \subset \uni$ a class of size $M$. We say that $\class$ is {\em causally even} if \footnote{We denote by $\mathcal{U}_{[M]}$ the uniform distribution over $[M]$.}
    $$
         \EV_{i \sim \mathcal{U}_{[M]}} \big[ F_i (Y[j] | X) \big] = 1, \ \ \forall j \in [d_S].
    $$
\end{assumption}
In this paper we assume that $\class$ is \emph{diverse} and \emph{even} by design, while we leave as future work the problem of selecting such a class from active interactions with $\uni$, which would add to our formulation flavors of active learning and experimental design~\cite{2014buhlmann, KocaogluExp2017, ghassami2018budgeted}.
\begin{figure}[t!]
    \centering
    \includegraphics[width=0.69\linewidth]{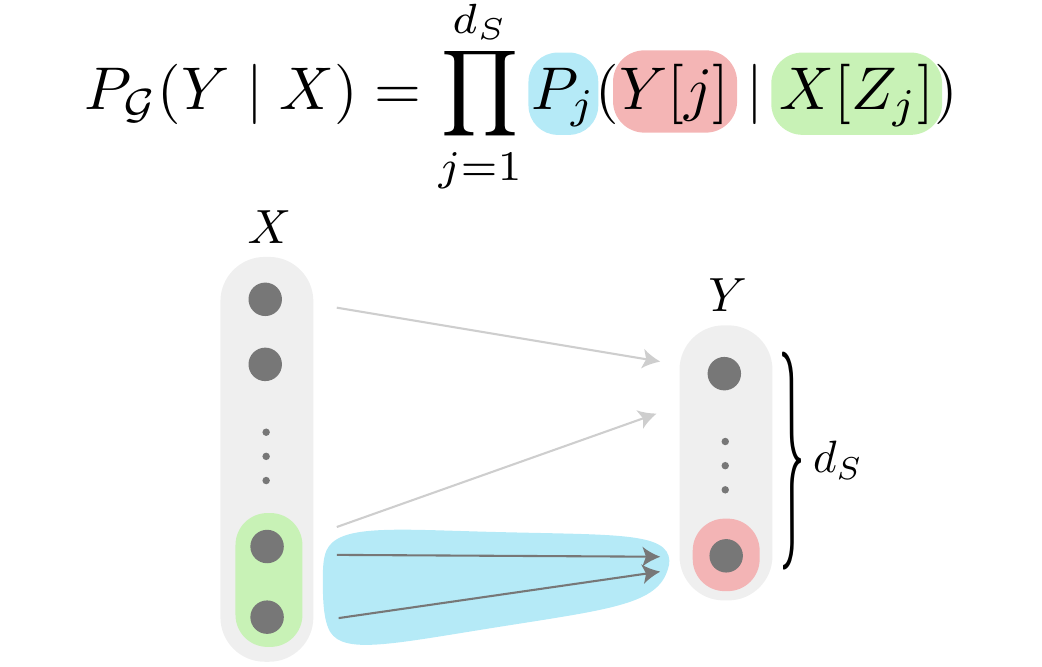}
    \caption{Causal transition model $P_\G$ of $\uni$.}
    \label{fig:bayesian_network}
\end{figure}

\subsection{Learning Systematic Generalization}
\label{sec:causal_perspective}
Before addressing the sample complexity of systematic generalization, it is worth considering the kind of interactions that we need in order to learn the causal transition model $P_\G$ and its underlying causal structure $\G$. Especially, thanks to the peculiar configuration of the causal structure $\G$, \ie a bipartite graph in which the edges are necessarily directed from the state-action features $X$ to the next state features $Y$, as a causation can only happen from the past to the future, learning the skeleton of $\G$ is equivalent to learning its full structure. Crucially, learning the skeleton of a causal graph does not need specific interventions, as it can be done from observational data alone~\cite{2014buhlmann}.
\begin{restatable}[]{proposition}{observationalDataSufficiency}
    \label{thr:observational_data_sufficiency}
    The causal structure $\G$ of $\;\uni$ can be identified from purely observational data.
\end{restatable}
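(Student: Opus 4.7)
The plan is to exploit the peculiar bipartite structure of $\G$ together with a standard fact about observational identifiability of skeletons. The key observation is that $\G$ has its node set partitioned into $X = (X_i)_{i \in [d_S + d_A]}$ and $Y = (Y_j)_{j \in [d_S]}$, with edges allowed only between $X$ and $Y$. Moreover, any edge must be directed from $X$ to $Y$, since causation runs forward in time and $X$ represents features at step $h$ while $Y$ represents features at step $h+1$. Consequently, the orientation of every edge in $\G$ is determined a priori by the bipartition, and recovering $\G$ reduces to recovering its skeleton.

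The next step is to invoke the standard result (\eg \citealt{2014buhlmann}) that the skeleton of a causal DAG is identifiable from purely observational data through conditional independence testing: an edge $\{X[z], Y[j]\}$ belongs to the skeleton if and only if there is no subset $S \subseteq X \setminus \{X[z]\}$ such that $Y[j] \perp X[z] \mid S$ under the joint distribution implied by the transition dynamics. In the general case, observational data identifies only the Markov equivalence class (a CPDAG), and edge orientations within the class remain ambiguous; however here the bipartite, temporally ordered structure makes the equivalence class a singleton, collapsing this standard obstruction.

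Putting the two pieces together, I would argue: (i) apply the conditional independence oracle (on observational samples from any $\mdp_i$, or from a mixture) to recover the skeleton; (ii) orient each recovered edge from $X$ to $Y$ by appealing to the temporal bipartition; (iii) conclude that the resulting directed graph coincides with $\G$ by Assumption~\ref{ass:diversity} and the definition of $\G$ as the intersection $\cap_i \G_i$. Strictly speaking the statement is about identifiability (i.e., infinite-sample, population-level), so no concentration or error analysis is needed; the quantitative version is deferred to Section~\ref{sec:complexity_cuasal_discovery}.

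The main obstacle is conceptual rather than technical: one must make explicit that the usual CPDAG ambiguity — which normally prevents full DAG recovery from observational data — is absent here because the bipartition plus temporal ordering forces a unique orientation. Once that point is articulated, the proposition follows immediately from the classical skeleton-identifiability result.
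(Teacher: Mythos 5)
Your argument is correct and is essentially the paper's own proof: both rest on the standard fact that observational data identifies a causal DAG up to its Markov equivalence class (hence its skeleton), and then note that the bipartite, temporally ordered structure of $\G$ fixes every edge orientation a priori, so the skeleton determines the full graph. The extra detail you give on conditional-independence characterization of the skeleton and on Assumption~\ref{ass:diversity} is consistent with, but not required beyond, the paper's argument.
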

In this paper, we will consider the online learning setting with a \emph{generative model} for estimating $\G$ and $P_\G$ from sampled interactions with a class $\class$ of size $M$. A generative model allows the agent to set the state of an MDP before sampling a transition, instead of drawing sequential interactions from the process.
Finally, analogous results to what we obtain here can apply to the offline setting as well, in addition to convenient coverage assumptions on the dataset.

\section{Sample Complexity Analysis}
\label{sec:complexity}
We provide a sample complexity analysis of the problem, which stands as a core contribution of this paper along with the problem formulation itself (\S~\ref{sec:problem}).
First, we consider the sample complexity of systematic generalization (\S~\ref{sec:complexity_generalization}). Then, we provide ancillary results on the estimation of the causal structure (\S~\ref{sec:complexity_cuasal_discovery}) and the Bayesian network (\S~\ref{sec:complexity_bayesian_network}) of an MDP, which can be of independent interest.

\subsection{Sample Complexity of Systematic Generalization with a Generative Model}
\label{sec:complexity_generalization}
We have access to a class $\class$ of discrete MDPs within a universe $\uni$, from which we draw interactions with a generative model $P(X)$. We aim to solve the systematic generalization problem as described in Definition~\ref{def:systematic_generalization}. This problem requires to provide, for any combination of an (unknown) MDP $\mdp \in \uni$, and a given reward function $r$, a planning policy $\widehat{\pi}$ such that
$ V^*_{\mdp, r} - V^{\widehat{\pi}}_{\mdp, r} \leq \epsilon.$
Especially, can we design an algorithm that guarantees this requirement with high probability by taking a number of samples $K$ that is polynomial in $\epsilon$ and the relevant parameters of $\class$? Here we give a partially positive answer to this question, by providing a simple but provably efficient algorithm that guarantees systematic generalization over $\uni$ up to an unavoidable sub-optimality term $\epsilon_\lambda$ that we will later specify.

The algorithm implements a model-based approach into two separated components. The first component is the procedure that actually interacts with the class $\class$ to obtain a principled estimation $\widehat{P}_{\widehat{\G}}$ of the causal transition model $P_\G$ of $\uni$. The second, is a planning oracle that takes as input a reward function $r$ and the estimated causal transition model, and returns an optimal policy $\widehat{\pi}$ operating on $\widehat{P}_{\widehat{\G}}$ as an approximation of the transition model $P_i$ of the true MDP $\mdp_i$.\footnote{The planning oracle can be substituted with a principled approximate planning solver~\citep[see][Section 3.3]{jin2020reward}.}

First, we provide the sample complexity of the causal transition model estimation (Algorithm~\ref{alg:learning_ctm}), which in turn is based on repeated causal structure estimations (Algorithm~\ref{alg:learning_g}) to obtain $\widehat{\G}$, and an estimation procedure of the Bayesian network over $\widehat{\G}$ (Algorithm~\ref{alg:learning_p}) to obtain $\widehat{P}_{\widehat{\G}}$.
\begin{restatable}[]{lemma}{sampleComplexityCTM}
\label{thr:sample_complexity_ctm}
    Let $\class = \{ \mdp_i \}_{i = 1}^M$ be a class of $M$ discrete MDPs, let $\delta \in (0, 1)$, $\epsilon > 0$. The Algorithm~\ref{alg:learning_ctm} returns a causal transition model $\widehat{P}_{\widehat{\G}}$ such that $ Pr ( \| \widehat{P}_{\widehat{\G}} - P_\G \|_1 \geq \epsilon ) \leq \delta $ with a sample complexity
    \begin{equation*}
        K = O \Big( M d_S^3 Z^2 n^{3Z + 1} \log \big( \tfrac{ 4 M d_S^2 d_A n^Z }{ \delta } \big) \Big/ \epsilon^2 \Big).
    \end{equation*}
\end{restatable}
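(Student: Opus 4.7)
The plan is to decompose Algorithm~\ref{alg:learning_ctm} into its two natural components---structure estimation and parameter (Bayesian network) estimation---apply the sample complexity results from Sections~\ref{sec:complexity_cuasal_discovery} and~\ref{sec:complexity_bayesian_network} to each, and close with a union bound. Concretely, I would set failure budgets $\delta/2$ for learning $\widehat{\G}$ and $\delta/2$ for learning the conditional probability tables over $\widehat{\G}$, and error budgets of $\epsilon$ (or, if needed, constants thereof) for each of the two phases, so that the overall event $\|\widehat{P}_{\widehat{\G}} - P_\G\|_1 \geq \epsilon$ is ruled out with probability $1-\delta$.

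For the structure phase, I would invoke Algorithm~\ref{alg:learning_g} separately on each $\mdp_i \in \class$. By Assumption~\ref{ass:diversity}, $\G = \cap_{i=1}^M \G_i$, so as long as every per-environment estimate $\widehat{\G}_i$ equals $\G_i$ we have $\widehat{\G}=\G$. I would allocate failure probability $\delta/(2M)$ to each call and take a union bound over the $M$ environments, which by the causal structure estimation rate from Table~\ref{tab:rates_summary} costs $O\bigl(M \cdot n \log(M d_S^2 d_A / \delta)/\epsilon^2\bigr)$ samples in total. For the parameter phase, conditioning on $\widehat{\G}=\G$, I would apply the Bayesian network estimation result to the factorization in Eq.~(\ref{eq:factorization}), using samples drawn uniformly at random from the class: Assumption~\ref{ass:evenness} ensures that for any scope $Z_j$ and configuration $X[Z_j]$ the conditional $\EV_{i \sim \uni_{[M]}}[P_i(Y[j] \mid X[Z_j])]$ equals $P_j(Y[j] \mid X[Z_j])$ up to the $F_i$ cancellation, so samples from the mixture behave as samples from the true causal transition model. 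The rate from Table~\ref{tab:rates_summary} then yields $\widetilde{O}\bigl(d_S^3 n^{3Z+1}/\epsilon^2\bigr)$ samples, where the log factor, made explicit, carries an argument of order $M d_S^2 d_A n^Z/\delta$ due to the union bound over the conditional probability tables at every parent configuration and over the $d_S$ factors, with the $Z^2$ arising when the per-factor error is propagated through the $L_1$ decomposition of $P_\G$.

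Summing the two budgets gives the claimed $K = O(M d_S^3 Z^2 n^{3Z+1}\log(4 M d_S^2 d_A n^Z/\delta)/\epsilon^2)$, since the parameter phase dominates once the $n^{3Z+1}$ term is absorbed and the structure phase contributes only a lower-order additive term. The one subtle point I expect to need care with is the reduction of the parameter estimation error across the class to estimation from $P_\G$ itself: because Algorithm~\ref{alg:learning_ctm} only has access to $P_i$ for $\mdp_i \in \class$ and not to $P_\G$ directly, I have to argue that the empirical conditional frequencies computed from the mixture of MDPs in $\class$ are unbiased estimators of the causal conditionals. This is exactly what Assumption~\ref{ass:evenness} provides when applied factor-wise; together with the fact that $\widehat{\G}=\G$ allows us to query exactly the right scopes $Z_j$, the per-factor concentration argument of Section~\ref{sec:complexity_bayesian_network} goes through unchanged, and the $L_1$ error on the joint $\widehat{P}_{\widehat{\G}}$ telescopes across the $d_S$ factors to yield the stated bound after rescaling $\epsilon \mapsto \epsilon/d_S$ and $Z$-related factors.

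The main obstacle is therefore not the concentration arithmetic (which is standard once the two ancillary lemmas are in place), but correctly matching the log argument and polynomial factors between the per-environment structure recovery, the union bound over environments, and the parameter recovery from the mixed sampler. I would verify this by writing out the log argument of the parameter estimation step as a function of the number of (scope, configuration) pairs $d_S n^Z$, the number of environments $M$, and the observed feature cardinality $n^{d_A}$ (bounded by $d_A n$ for the outer union over $X$ coordinates), and checking that the resulting expression is dominated by $\log(4 M d_S^2 d_A n^Z/\delta)$ as stated.
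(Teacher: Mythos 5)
There is a genuine gap in your structure phase. You assume that, with failure probability $\delta/(2M)$ per environment, each call to Algorithm~\ref{alg:learning_g} returns $\widehat{\G}_i = \G_i$ exactly, so that $\widehat{\G} = \cap_i \widehat{\G}_i = \G$. But Theorem~\ref{thr:sample_complexity_G} only guarantees recovery of the $\epsilon$-dependency subgraph $\G_{\epsilon}$ (Definition~\ref{def:dependency_subgraph}): edges corresponding to dependencies weaker than the testing threshold are statistically indistinguishable from independence with finitely many samples, so exact recovery of $\G$ is not achievable. The paper therefore uses a \emph{three}-term decomposition,
\begin{equation*}
\| \widehat{P}_{\widehat{\G}} - P_\G \|_1 \leq \| \widehat{P}_{\widehat{\G}} - P_{\widehat{\G}} \|_1 + \| P_{\widehat{\G}} - P_{\G_{\epsilon'}} \|_1 + \| P_{\G_{\epsilon'}} - P_\G \|_1,
\end{equation*}
where the last term is a deterministic bias that your two-phase budget never accounts for. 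It is controlled by Lemma~\ref{thr:l1_norm_bayesian_networks}, which shows $\| P_{\G_{\epsilon'}} - P_\G \|_1 \leq d_S Z \epsilon'$, forcing the independence-testing threshold to be set at $\epsilon' = \epsilon/(3 d_S Z)$. Without identifying this term your argument cannot be closed: either $\widehat{\G}$ may differ from $\G$ on weak edges (and you have no bound on the resulting $L_1$ discrepancy), or you must implicitly assume all true dependencies are bounded away from zero, which the statement does not.

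This omission also explains why your bookkeeping of the polynomial factors does not match the paper. You attribute the $Z^2$ in the final bound to error propagation across the $d_S$ factors in the parameter phase, but Theorem~\ref{thr:sample_complexity_P} contributes $d_S^3 n^{3Z+1}$ with no $Z^2$. The $Z^2$ (and an extra $d_S^2$) comes precisely from running each independence test at the rescaled threshold $\epsilon/(3 d_S Z)$, which inflates the per-environment structure cost to $O(d_S^2 Z^2 n \log(2 M d_S^2 d_A/\delta)/\epsilon^2)$ — exactly the $K'$ in Algorithm~\ref{alg:learning_ctm}. Your union bound over the $M$ environments and your treatment of the mixture via the evenness assumption are fine and consistent with the paper, but the proof as proposed would fail at the step where you condition on $\widehat{\G} = \G$.
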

An analogous result can be derived for tabular MDPs.
\begin{restatable}[]{lemma}{sampleComplexityCTMtabular}
\label{thr:sample_complexity_ctm_tabular}
    Let $\class = \{ \mdp_i \}_{i = 1}^M$ be a class of $M$ tabular MDPs. The result of Lemma~\ref{thr:sample_complexity_ctm} reduces to
    \begin{equation*}
        K = O \Big( M S^2 Z^2 2^{2Z} \log \big( \tfrac{ 4 M S^2 A 2^Z }{ \delta} \big) \Big/ \epsilon^2 \Big).
    \end{equation*}
\end{restatable}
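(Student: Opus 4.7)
The plan is to specialize Lemma~\ref{thr:sample_complexity_ctm} to the tabular setting, exploiting the one-hot representation of tabular states and actions. First I would invoke the embedding of any tabular MDP into the discrete MDP framework with $n = 2$, $d_S = S$, and $d_A = A$ (as already noted in Section~\ref{sec:preliminaries}), so that Lemma~\ref{thr:sample_complexity_ctm} applies syntactically and yields, after substitution, a bound of order $M S^3 Z^2 2^{3Z+1} \log(4 M S^2 A 2^Z/\delta)/\epsilon^2$. This is already polynomial in all relevant quantities, but looser than the target by a factor of $S \cdot 2^{Z+1}$, so a refined argument is needed to drop it to the stated rate.

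The refinement comes from revisiting the Bayesian network estimation step that dominates Lemma~\ref{thr:sample_complexity_ctm}. In the general discrete analysis, one estimates $d_S$ separate conditional distributions $P_j(Y[j]\mid X[Z_j])$ over binary outputs and uses a triangle inequality across the factorization $\Pg(Y|X)=\prod_j P_j(Y[j]\mid X[Z_j])$ to convert per-factor $L_1$ errors into a joint $L_1$ error on $\Pg$. In the tabular case, the one-hot encoding forces exactly one coordinate of $Y$ to be active, so $\Pg(\cdot\mid X)$ is a single categorical distribution on $S$ atoms rather than a product of $d_S$ conditionally independent binary factors. I would therefore replace the per-factor estimation with a direct estimation of this single categorical conditional for each of the at most $2^Z$ configurations of the parent set. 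A standard concentration inequality for categorical distributions (e.g., Weissman's inequality) yields $O(S/\epsilon^2)$ samples per parent configuration for $\epsilon$-accuracy in $L_1$, hence $O(S\cdot 2^Z/\epsilon^2)$ per conditional distribution; union-bounding over the $2^Z$ configurations then produces the $O(S^2 2^{2Z}/\epsilon^2)$ Bayesian network rate that appears in the corresponding row of Table~\ref{tab:rates_summary}.

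The remaining pieces mirror the proof of Lemma~\ref{thr:sample_complexity_ctm}. The causal structure estimation contributes only a logarithmic-in-$S^2 A$ factor (the tabular causal structure row of Table~\ref{tab:rates_summary}) and is therefore dominated; the aggregation over the class $\class$ produces the multiplicative factor $M$, since the environment-specific factors $F_i$ cancel in expectation under Assumption~\ref{ass:evenness} when samples are pooled; and the $Z^2$ inflation carries over unchanged from the scope-dependent per-conditional bounds used in the discrete case. Combining these ingredients with the refined Bayesian network rate yields the claimed $O(M S^2 Z^2 2^{2Z}\log(4 M S^2 A 2^Z/\delta)/\epsilon^2)$ bound.

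The hard part will be making the one-hot saving rigorous: because tabular features are deterministically correlated (exactly one coordinate of the one-hot block equals $1$), the naive ``independent factors'' treatment used in the discrete proof overcounts, but this same degeneracy is exactly what lets us collapse a product of $d_S$ binary conditional estimates into a single $S$-valued categorical conditional estimate without paying the $d_S$ and $n^{Z+1}$ slack. Getting this collapse to commute with the triangle inequalities and union bounds inside the proof of Lemma~\ref{thr:sample_complexity_ctm} is the main technical step; everything else is routine parameter substitution.
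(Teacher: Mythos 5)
You correctly locate where the improvement over naive substitution ($n=2$, $d_S=S$, $d_A=A$, which would give $MS^3Z^2 2^{3Z+1}$) must come from, namely the Bayesian network estimation step, and you correctly identify the target rate $S^2 2^{2Z}$. However, the mechanism you propose for obtaining it does not work, and it is not the paper's mechanism. Your idea is to collapse the one-hot product $\prod_j P_j(Y[j]\mid X[Z_j])$ into a single $S$-ary categorical conditional and estimate it directly for ``each of the at most $2^Z$ configurations of the parent set.'' But the collapsed variable's parent set is $\bigcup_j Z_j$, not a single scope of size $Z$: the $Z$-sparseness assumption bounds the in-degree of each coordinate $Y[j]$ separately, so the union can contain up to $\min(SZ, S+A)$ input features and the number of parent configurations is not $2^Z$. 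In addition, your arithmetic does not close: Weissman at accuracy $\epsilon$ per configuration costs $O(S\log(1/\delta)/\epsilon^2)$, and union-bounding over $2^Z$ configurations multiplies by at most $2^Z$ (plus a log from the confidence split), which gives $S\,2^Z/\epsilon^2$, not the claimed $S^2 2^{2Z}/\epsilon^2$; the extra factor $S\,2^Z$ you insert is unaccounted for. You flag the rigor of the ``one-hot saving'' as the hard part, but the gap is not merely one of rigor --- the collapse itself is structurally invalid.

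The paper's route is different and simpler at this level: it reuses verbatim the three-term decomposition $(\star),(\bullet),(\diamond)$ from the proof of Lemma~\ref{thr:sample_complexity_ctm}, bounds $(\diamond)$ via Lemma~\ref{thr:l1_norm_bayesian_networks} to fix $\epsilon'\leq \epsilon/(3SZ)$, bounds $(\bullet)$ by invoking Corollary~\ref{thr:sample_complexity_G_tabular} with confidence $\delta/(2M)$, and bounds $(\star)$ by invoking Corollary~\ref{thr:sample_complexity_P_tabular}. The factor $S\,2^{Z+1}$ is saved inside Corollary~\ref{thr:sample_complexity_P_tabular}, which keeps the factored per-coordinate estimator but works in KL divergence rather than $L_1$: the chain rule makes the KL of the product equal to the sum of per-factor KLs, the binary concentration $\Pr\big(d_{KL}(\widehat{P}\,\|\,P)\geq \epsilon\big)\leq 2e^{-K\epsilon}$ scales as $1/\epsilon$ rather than $1/\epsilon^2$ so splitting the budget over $S\,2^Z$ conditionals costs only a linear factor, and Pinsker's inequality converts back to $L_1$ at the end. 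If you want to complete your proof, the correct move is to invoke that corollary (or reproduce its KL-based argument) for the $(\star)$ term rather than attempting the categorical collapse.
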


Having established the sample complexity of the causal transition model estimation, we can now show how the learned model $\widehat{P}_{\widehat{\G}}$ allows us to approximately solve, via a planning oracle, any task defined by a combination of a latent MDP $\mdp_i \in \uni$ and a given reward function $r$. 
\begin{algorithm}[t!]
    \caption{Causal Transition Model Estimation}
    \label{alg:learning_ctm}
    \begin{algorithmic}[t]
        \STATE \textbf{Input}: class of MDPs $\class$, error $\epsilon$, confidence $\delta$
        \STATE let $K' = C' \big( d_S^2 Z^2 n  \log (2 M d_S^2 d_A / \delta ) \big/ \epsilon^2 \big)$
        \STATE set the generative model $P(X) = \mathcal{U}_{X}$
        \FOR{$i = 1, \ldots, M$}
            \STATE let $P_i (Y|X)$ the transition model of $\mdp_i \in \class$
            \STATE $\widehat{\G}_i \leftarrow$ \emph{Causal Structure Estimation} $(P_i, P(X), K')$
        \ENDFOR
        \STATE let $\widehat{\G} = \cap_{i = 1}^M \widehat{\G}_i$
        \STATE let $K'' = C'' \big( d_S^3  n^{3Z + 1} \log (4 d_S n^Z / \delta ) \big/ \epsilon^2 \big)$
        \STATE let $P_\class (Y | X)$ be the mixture $\frac{1}{M} \sum_{i = 1}^M P_i (Y | X)$
        \STATE $\widehat{P}_{\widehat{\G}} \leftarrow$ \emph{Bayesian Network Estimation} $(P_\class, \widehat{\G}, K'')$
        \STATE \textbf{Output}: causal transition model $\widehat{P}_{\widehat{\G}}$
    \end{algorithmic}
\end{algorithm}

To provide this result in the discrete MDP setting, we have to further assume that the transition dynamics $P_i$ of the target MDP $\mdp_i$ admits factorization analogous to~\eqref{eq:factorization}, such that we can write $P_i (Y | X) = \prod_{j = 1}^{d_S} P_{i, j} (Y[j] | X[Z_j^{'}])$, where the scopes $Z_j^{'}$ are given by the environment causal structure $\G_i$, which we assume to be $2Z$-sparse (Assumption~\ref{ass:sparseness}).
\begin{restatable}[]{theorem}{sampleComplexity}
\label{thr:sample_complexity}
    Let $\delta \in (0, 1)$ and $\epsilon > 0$. For an unknown discrete MDP $\mdp \in \uni$, and a given reward function $r$, a planning oracle operating on the causal transition model $\widehat{P}_{\widehat{\G}}$ as an approximation of $\mdp$ returns a policy $\widehat{\pi}$ such that
    $
        Pr \big( V^*_{\mdp_i, r} - V_{\mdp_i, r}^{\widehat{\pi}} \geq \epsilon_\lambda + \epsilon \big) \leq \delta,
    $
    where $\epsilon_\lambda = 2 \lambda H^3 d_S n^{2Z + 1}$, and $\widehat{P}_{\widehat{\G}}$ is obtained from Algorithm~\ref{alg:learning_ctm} with $\delta' = \delta$ and $\epsilon' = \epsilon / 2 H^3 n^{Z + 1}$.
\end{restatable}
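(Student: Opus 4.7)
The plan is to convert the estimation guarantee from Lemma~\ref{thr:sample_complexity_ctm} into a planning guarantee via a standard simulation-lemma argument, while paying for the irreducible gap between the shared causal model $P_\G$ and the environment-specific $P_i$ through Assumption~\ref{ass:sufficiency}. Let $\widehat{\mdp}$ denote the MDP whose transition is the estimate $\widehat{P}_{\widehat{\G}}$ and whose reward is the given $r$, and let $\widehat{\pi}$ be the policy returned by the planning oracle, so $\widehat{\pi}$ is optimal in $\widehat{\mdp}$. Then I decompose
\begin{align*}
V^*_{\mdp_i,r} - V^{\widehat{\pi}}_{\mdp_i,r}
\;\le\; \bigl(V^{\pi^*}_{\mdp_i,r} - V^{\pi^*}_{\widehat{\mdp},r}\bigr) + \bigl(V^{\widehat{\pi}}_{\widehat{\mdp},r} - V^{\widehat{\pi}}_{\mdp_i,r}\bigr),
\end{align*}
using $V^{\pi^*}_{\widehat{\mdp},r}\le V^{\widehat{\pi}}_{\widehat{\mdp},r}$. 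A standard Bellman-residual/simulation lemma bounds each summand by $H \sum_{h}\mathbb{E}\bigl[\|(P_i-\widehat{P}_{\widehat{\G}})(\cdot|s_h,a_h)\|_1\bigr]\cdot\|V\|_\infty$, which I loosen to $H^2\sup_{s,a}\|P_i(\cdot|s,a) - \widehat{P}_{\widehat{\G}}(\cdot|s,a)\|_1$, giving one factor of $H$ still to be absorbed by the factored-norm conversion described below.

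Next I split the transition error by the triangle inequality,
\[
\sup_{s,a}\|P_i - \widehat{P}_{\widehat{\G}}\|_1 \;\le\; \sup_{s,a}\|P_i - P_\G\|_1 + \sup_{s,a}\|P_\G - \widehat{P}_{\widehat{\G}}\|_1.
\]
The first term is at most $\lambda$ by Assumption~\ref{ass:sufficiency}. For the second term, I use the factorization in Eq.~\eqref{eq:factorization}, which holds for both $P_\G$ and $\widehat{P}_{\widehat{\G}}$ (both defined on scopes of size at most $Z$ in $\widehat{\G}$; on the event returned by Algorithm~\ref{alg:learning_ctm} we have $\widehat{\G}=\G$ with the required probability). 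Summing marginal $L_1$ errors over the $d_S$ factors and taking suprema over parent configurations of size at most $n^{Z}$ and outcomes $Y[j]\in[n]$, I convert the average-case guarantee $\|\widehat{P}_{\widehat{\G}} - P_\G\|_1 \le \epsilon'$ of Lemma~\ref{thr:sample_complexity_ctm} into a sup-over-$X$ guarantee at cost $n^{Z+1}$; similarly the per-$X$ bound $\lambda$ inflates to $\lambda\,d_S\,n^{2Z+1}$ once combined with the $2Z$-sparse factored representation of $P_i$ along the trajectories induced by $\pi^*$ and $\widehat{\pi}$.

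Assembling the pieces yields
\[
V^*_{\mdp_i,r} - V^{\widehat{\pi}}_{\mdp_i,r} \;\le\; 2H^3 d_S n^{2Z+1}\lambda \;+\; 2H^3 n^{Z+1}\epsilon',
\]
so choosing $\epsilon' = \epsilon/(2H^3 n^{Z+1})$ in Lemma~\ref{thr:sample_complexity_ctm} produces exactly the stated bound $\epsilon_\lambda+\epsilon$, and the probability $1-\delta$ comes directly from the confidence of Lemma~\ref{thr:sample_complexity_ctm}. The main obstacle is the bookkeeping between the \emph{average-over-$X$} $L_1$ guarantees of the Bayesian-network estimator and the \emph{sup-over-$X$} guarantees required by the simulation lemma along policy-induced trajectories; the factored structure under Assumption~\ref{ass:sparseness} (size $Z$ for $\G$, size $2Z$ for $\G_i$) is what keeps this conversion polynomial rather than blowing up to $n^{d_S}$, and it is precisely this conversion that dictates the $n^{Z+1}$ and $d_S n^{2Z+1}$ pre-factors.
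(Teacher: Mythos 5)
Your proposal follows essentially the same route as the paper: the oracle-optimality decomposition into two evaluation errors, a simulation-lemma bound on each, a triangle-inequality split of the transition error into the $\lambda$-sufficiency term and the $\widehat{P}_{\widehat{\G}}$-vs-$P_\G$ estimation term, and the conversion of the generative-model (uniform-over-$X$) $L_1$ guarantee to a policy-trajectory guarantee via the factored structure, which is exactly where the paper's $n^{Z+1}$ and $d_S n^{2Z+1}$ pre-factors arise (in the paper this conversion is an importance-weighting step with density ratio $P^{\pi}(X[Z_j])/\mathcal{U}(X[Z_j])\le n^{Z}$, which is the precise mechanism behind what you call the average-to-sup bookkeeping). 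The final choice $\epsilon'=\epsilon/(2H^3 n^{Z+1})$ and the source of the confidence $\delta$ match the paper's proof.
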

Without the additional factorization of the environment-specific transition model, the result of Theorem~\ref{thr:sample_complexity} reduces to the analogous for the tabular MDP setting.
\begin{restatable}[]{corollary}{sampleComplexitytabular}
    \label{thr:sample_complexity_tabular}
    Let $\mdp$ a tabular MDP, the result of Theorem~\ref{thr:sample_complexity} holds with $\epsilon_\lambda = 2 \lambda S A H^3 $, $\epsilon' = \epsilon / 2 S A H^3$.
\end{restatable}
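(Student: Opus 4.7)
The plan is to adapt the proof of Theorem~\ref{thr:sample_complexity} to the tabular setting, while dropping the factorization requirement on the environment-specific transition model $P_i$. The starting observation is that any tabular MDP instantiates the discrete MDP formalism via the one-hot encoding with $n=2$, $d_S = S$, $d_A = A$, so Lemma~\ref{thr:sample_complexity_ctm_tabular} already gives the required estimation guarantee for $\widehat{P}_{\widehat{\G}}$.

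First, I would apply the triangle inequality, uniformly in $(s,a)$, to connect the estimated causal transition model to the true target model:
\begin{equation*}
    \|\widehat{P}_{\widehat{\G}}(\cdot|s,a) - P_i(\cdot|s,a)\|_1 \leq \|\widehat{P}_{\widehat{\G}}(\cdot|s,a) - P_{\G}(\cdot|s,a)\|_1 + \|P_{\G}(\cdot|s,a) - P_i(\cdot|s,a)\|_1.
\end{equation*}
The first summand is bounded by $\epsilon'$ with probability $1-\delta$ from Lemma~\ref{thr:sample_complexity_ctm_tabular}, and the second by $\lambda$ from Assumption~\ref{ass:sufficiency}. Hence, for every $(s,a)$, $\|\widehat{P}_{\widehat{\G}}(\cdot|s,a) - P_i(\cdot|s,a)\|_1 \leq \epsilon' + \lambda$ on the good event.

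Next, I would invoke the standard simulation lemma for tabular episodic MDPs (the analogue of the argument in the proof of Theorem~\ref{thr:sample_complexity}, but without exploiting any factored structure on $P_i$). If $\widehat{\pi}$ is an optimal policy in the surrogate MDP with transition model $\widehat{P}_{\widehat{\G}}$ and reward $r$, a telescoping of $V^*_{\mdp,r} - V^{\widehat{\pi}}_{\mdp,r}$ over the horizon yields a bound of the form $c\cdot H^3 \cdot S A \cdot \max_{s,a}\|\widehat{P}_{\widehat{\G}}(\cdot|s,a) - P_i(\cdot|s,a)\|_1$, where the $SA$ factor arises from summing the per-state-action contributions (in the discrete MDP proof, this factor is replaced by the factorization-dependent $d_S n^{2Z+1}$). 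Combining with the previous step gives $V^*_{\mdp,r} - V^{\widehat{\pi}}_{\mdp,r} \leq 2 S A H^3 (\epsilon' + \lambda)$, up to the constants matching the convention of Theorem~\ref{thr:sample_complexity}.

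Finally, choosing $\epsilon' = \epsilon/(2 S A H^3)$ makes the estimation-error term contribute exactly $\epsilon$, and identifies the unavoidable sub-optimality as $\epsilon_\lambda = 2\lambda S A H^3$, as claimed. The main obstacle I expect is the simulation lemma step: since the corollary deliberately avoids the factorization assumption on $P_i$ used in Theorem~\ref{thr:sample_complexity}, the analysis must not route through the per-factor decomposition of $\|\widehat{P}_{\widehat{\G}} - P_i\|_1$ but instead through the classical tabular argument, and one must verify that this swap yields the precise $S A H^3$ scaling (rather than a looser $S^2 A$ or $H^2$ dependence) and that the high-probability event from Lemma~\ref{thr:sample_complexity_ctm_tabular} transfers cleanly without any blow-up in $\delta$.
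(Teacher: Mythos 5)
Your overall skeleton matches the paper's: decompose the sub-optimality into evaluation errors (plus an optimization error that vanishes under the planning oracle), bound the evaluation error by a simulation-type argument after splitting the model error into an estimation part ($\epsilon'$) and a sufficiency part ($\lambda$), and then tune $\epsilon' = \epsilon/(2SAH^3)$. The final arithmetic also agrees. However, there is a genuine gap in the middle step, and it is exactly the step you flag as "the main obstacle." First, you read Lemma~\ref{thr:sample_complexity_ctm_tabular} as giving a \emph{uniform per-$(s,a)$} conditional bound $\sup_{s,a}\|\widehat{P}_{\widehat{\G}}(\cdot|s,a) - P_{\G}(\cdot|s,a)\|_1 \leq \epsilon'$. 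The lemma does not provide that: its guarantee is on the aggregate $L_1$ distance $\|\widehat{P}_{\widehat{\G}} - P_\G\|_1$ (which, in the proofs of Theorem~\ref{thr:sample_complexity_P} and its corollary, is built by summing conditional errors over factors and over conditioning values). Second, your two steps are mutually inconsistent: if the per-$(s,a)$ sup-norm bound of your first step actually held, the classical simulation lemma would give $2H^2(\epsilon'+\lambda)$ with \emph{no} $SA$ factor at all, so your asserted $c\,H^3 SA\,\max_{s,a}\|\cdot\|_1$ scaling would not be what "the standard simulation lemma" produces, and attributing $SA$ to "summing the per-state-action contributions" does not describe a valid derivation.

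In the paper, the $SA$ factor has a different and specific origin: the samples are drawn under the uniform generative distribution $P(s,a)=1/(SA)$, so the estimation guarantee is effectively a uniform-weighted average over $(s,a)$; to control the on-policy evaluation error one must pay the importance ratio $\max_{\pi^\dagger} P^{\pi^\dagger}(s,a)/P(s,a) \leq SA$. The extra power of $H$ (yielding $H^3$ rather than $H^2$) comes from a recursive unrolling of the terms $|(\widehat{P}_{\widehat{\G}} - P_\G)\widehat{V}^\pi_{h+1}(s,a)| \leq \sum_{i=h+1}^H i\,\epsilon' \leq H^2\epsilon'$ inside the value-difference lemma of \citet{dann2018unifying}, before summing over $h\in[H]$; the same unrolling with $\lambda$ in place of $\epsilon'$ handles the $P_\G - P$ term via Assumption~\ref{ass:sufficiency}. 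To close your proof you would need to replace the pointwise triangle inequality and the generic simulation lemma with this distribution-shift argument (or else prove a genuinely uniform per-$(s,a)$ estimation guarantee, which would in fact yield a tighter $H^2$ bound but is not what Algorithm~\ref{alg:learning_ctm} is shown to deliver).
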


Theorem~\ref{thr:sample_complexity} and Corollary~\ref{thr:sample_complexity_tabular} establish the sample complexity of systematic generalization through Lemma~\ref{thr:sample_complexity_ctm} and Lemma~\ref{thr:sample_complexity_ctm_tabular} respectively. For the discrete MDP setting, we have that $\widetilde{O} (M H^6 d_S^3 Z^2 n^{5Z + 3} )$ samples are required, which reduces to $\widetilde{O} (M H^6 S^4 A^2 Z^2 )$ in the tabular setting. Unfortunately, we are only able to obtain systematic generalization up to an unavoidable sub-optimality term $\epsilon_\lambda$. This error term is related to the $\lambda$-sufficiency of the causal transition model (Assumption~\ref{ass:sufficiency}), and it accounts for the fact that $P_\G$ cannot fully explain the transition dynamics of each $\mdp \in \uni$, even when it is estimated exactly. This is inherent to the ambitious problem setting, and can be only overcome with additional interactions with the test MDP $\mdp$.

\subsection{Sample Complexity of Learning the Causal Structure of a Discrete MDP}
\label{sec:complexity_cuasal_discovery}
As a byproduct of the main result in Theorem~\ref{thr:sample_complexity}, we can provide a sample complexity result for the problem of learning the causal structure $\G$ underlying a discrete MDP $\mdp$ with a generative model.
We believe that this problem can be of independent interest, mainly in consideration of previous work on causal discovery of general stochastic processes~\citep[\eg][]{wadhwa2021sample}, for which we refine known results to account for the structure of an MDP, which allows for a tighter analysis of the sample complexity.
\begin{algorithm}[t!]
    \caption{MDP Causal Structure Estimation}
    \label{alg:learning_g}
    \begin{algorithmic}[t]
        \STATE \textbf{Input}: sampling model $P(Y|X)$, generative model $P(X)$, batch parameter $K$
        \STATE draw $(x_k, y_k)_{k = 1}^{K} \overset{\text{iid}}{\sim} P(Y | X) P(X)$
        \STATE initialize $\widehat{\G} = \emptyset$
        \FOR{each pair of nodes $X_z, Y_j$}
            \STATE compute the independence test $\ind (X_z, Y_j)$
            \STATE if dependent add $(X_z, Y_j)$ to  $\widehat{\G}$
        \ENDFOR
        \STATE \textbf{Output}: causal dependency graph $\widehat{\G}$
    \end{algorithmic}
\end{algorithm}

Instead of the exact dependency graph $\G$, which can include dependencies that are too weak to be detected with a finite number of samples, we only address the dependencies above a given threshold.
\begin{definition}
\label{def:dependency_subgraph}
    We call $\G_\epsilon \subseteq \G$ the {\em $\epsilon$-dependency subgraph} of $\G$ if it holds, for each pair $(A, B) \in \G$ distributed as $P_{A, B}$, $ (A, B) \in \G_\epsilon $ iff $ \inf_{Q \in \{ \Delta_A \times \Delta_B\} } \| P_{A, B} - Q \|_1 \geq \epsilon.$
\end{definition}
Before presenting the result, we state the existence of a principled independence testing procedure.
\begin{lemma}[\citet{diakonikolas2021optimal}]
\label{thr:independence_testing}
    There exists an $(\epsilon,\delta)$-independence tester $\ind (A, B)$ for distributions $P_{A, B}$ on $[n] \times [n]$, which returns \emph{yes} if $A, B$ are independent, \emph{no} if $\inf_{Q \in \{ \Delta_A \times \Delta_B\} } \| P_{A, B} - Q \|_1 \geq \epsilon$, both with probability at least $1 - \delta$ and sample complexity $O ( n \log (1 / \delta) / \epsilon^2 )$.
\end{lemma}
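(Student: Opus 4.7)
The plan is to reduce the independence test on $[n]\times[n]$ to an $L_1$ closeness test between the joint distribution $P_{A,B}$ and the product of its marginals $P_A\otimes P_B$. The reduction is sound because, on one hand, if $A$ and $B$ are independent then $P_{A,B}=P_A\otimes P_B$ and the distance is $0$; on the other, the hypothesis $\inf_{Q\in\Delta_A\times\Delta_B}\|P_{A,B}-Q\|_1\geq\epsilon$ immediately implies $\|P_{A,B}-P_A\otimes P_B\|_1\geq\epsilon$, since $P_A\otimes P_B$ is a valid choice of $Q$. So a procedure that distinguishes $L_1$ distance $0$ from $L_1$ distance at least $\epsilon$ against the (empirical) product of marginals would suffice to solve the stated problem.

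Concretely, I would draw $m$ i.i.d.\ samples from $P_{A,B}$, extract from each sample its first and second coordinate to form empirical marginals $\widehat{P}_A$ and $\widehat{P}_B$, and then evaluate a chi-squared-type statistic comparing the empirical joint distribution against $\widehat{P}_A\otimes\widehat{P}_B$. The core analytical step is to show that, under the null hypothesis $A\perp B$, the statistic concentrates near zero, while under the alternative it is bounded away from zero by a margin that scales with $\epsilon$; both events must hold with constant probability once $m=O(n/\epsilon^2)$. A standard median-of-runs amplification across $O(\log(1/\delta))$ independent repetitions then boosts the confidence to $1-\delta$, yielding the claimed $O(n\log(1/\delta)/\epsilon^2)$ overall sample complexity.

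The main technical obstacle I expect is controlling the variance of the chi-squared statistic in the tensor-product regime: the reference distribution $\widehat{P}_A\otimes\widehat{P}_B$ is built from the very same samples used to compute the joint empirical distribution, so cross-correlation terms arise that a naive expansion bounds pessimistically, giving an $n^2$ rather than $n$ dependence on the marginal support. Achieving the optimal $n$-rate specifically exploits subtle second-moment cancellations, which I would invoke as a black box from \citet{diakonikolas2021optimal} rather than re-derive from first principles; beyond that, the remainder of the argument is a standard Chebyshev-plus-median-boosting template with no further conceptual difficulties.
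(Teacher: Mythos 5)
This lemma is imported verbatim from \citet{diakonikolas2021optimal}; the paper offers no proof of it, so there is nothing internal to compare against. Your sketch is the standard and correct route: the reduction from ``far from every product distribution'' to ``far from $P_A\otimes P_B$'' is sound (the product of marginals is one admissible $Q$), and you rightly identify that the only real content --- achieving variance bounds that give a linear rather than quadratic dependence on $n$ for the chi-squared-type statistic built from shared samples --- must be taken as a black box from the cited work, which is exactly how the paper itself treats it. One minor remark: plain median-of-runs amplification gives the $O(n\log(1/\delta)/\epsilon^2)$ bound as stated in the lemma, but the paper's downstream use (Equation~\eqref{eq:K} in the proof of Theorem~\ref{thr:sample_complexity_G}) actually invokes the sharper high-confidence bound of \citet{diakonikolas2021optimal}, in which $\log(1/\delta)$ enters sublinearly in the dominant term; your amplification argument suffices for the lemma as written but would not reproduce that refinement.
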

We can now provide an upper bound to the number of samples required by a simple estimation procedure to return an $(\epsilon, \delta)$-estimate $\widehat{\G}$ of the causal dependency graph $\G$.
\begin{restatable}[]{theorem}{sampleComplexityG}
\label{thr:sample_complexity_G}
    Let $\mdp$ a discrete MDP with causal structure $\G$, let $\delta \in (0, 1)$, and let $\epsilon > 0$. The Algorithm~\ref{alg:learning_g} returns a dependency graph $\widehat{\G}$ such that $ Pr ( \widehat{\G} \neq \G_\epsilon ) \leq \delta $ with a sample complexity
    $
        K = O \big( n \log ( d_S^2 d_A / \delta) / \epsilon^2 \big).
    $
\end{restatable}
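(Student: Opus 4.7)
The plan is to build $\widehat{\G}$ by invoking the $(\epsilon, \delta')$-independence tester of Lemma~\ref{thr:independence_testing} on every candidate edge of the bipartite graph between $X$-features and $Y$-features, sharing a single pool of $K$ observational samples drawn from the generative model with input distribution $P(X) = \mathcal{U}_X$. Concretely, I would first draw $K$ i.i.d.\ samples $(X^{(k)}, Y^{(k)})_{k \in [K]}$ and, for each of the $N := d_S (d_S + d_A)$ pairs $(z, j) \in [d_S + d_A] \times [d_S]$, project the dataset onto coordinates $(X[z], Y[j])$ and call the tester $\ind(X[z], Y[j])$, adding the directed edge $(X[z], Y[j])$ to $\widehat{\G}$ precisely when the tester answers ``no''.

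For correctness, the projected samples for a given pair are i.i.d.\ from the induced marginal $P_{X[z], Y[j]}$ under $X \sim \mathcal{U}_X$, which is exactly the distribution in terms of which Definition~\ref{def:dependency_subgraph} is phrased. By the guarantee of Lemma~\ref{thr:independence_testing}, the tester outputs ``yes'' on any independent pair (so it correctly excludes pairs not in $\G_\epsilon$) and ``no'' on any pair with $\inf_{Q} \| P_{X[z], Y[j]} - Q \|_1 \geq \epsilon$ (so it correctly includes pairs in $\G_\epsilon$), each with failure probability at most $\delta'$. Setting $\delta' = \delta / N$ and taking a union bound over the $N$ test events, every test is simultaneously correct with probability at least $1 - \delta$, whence $\widehat{\G} = \G_\epsilon$.

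The sample complexity then follows from the decisive observation that the same dataset can be reused across all $N$ tests: each instance of the tester requires only i.i.d.\ samples from its own pair's marginal, which projections of the shared joint sample automatically provide, so $K$ is driven by the single per-test requirement rather than multiplied by $N$. Plugging $\delta' = \delta / N$ and the loose bound $N = d_S(d_S + d_A) \leq d_S^2 d_A$ into the $O(n \log(1/\delta')/\epsilon^2)$ complexity of Lemma~\ref{thr:independence_testing} yields $K = O(n \log(d_S^2 d_A / \delta)/\epsilon^2)$ as claimed.

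The main subtle point I anticipate is the sample reuse: the $N$ tester outputs are deterministic functions of the same random dataset and are therefore statistically dependent across pairs. This does not obstruct the union bound, however, since the argument only invokes the per-pair marginal failure probability supplied by Lemma~\ref{thr:independence_testing} and never requires joint independence across tests. A secondary point worth flagging is that Proposition~\ref{thr:observational_data_sufficiency} implicitly justifies why marginal tests on $(X[z], Y[j])$ suffice: the bipartite, past-to-future orientation of $\G$ is known a priori, so recovering its skeleton pair-by-pair reconstructs the fully oriented $\G_\epsilon$.
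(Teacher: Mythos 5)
Your proposal matches the paper's proof essentially step for step: a union bound over all $d_S(d_S+d_A)$ candidate edges, each decided by the $(\epsilon,\delta')$-independence tester of Lemma~\ref{thr:independence_testing} with $\delta' = \delta/(d_S^2 d_A)$, run on projections of a single shared pool of i.i.d.\ samples as in Algorithm~\ref{alg:learning_g}. Your explicit remarks on sample reuse across dependent tests and on why skeleton recovery suffices (the orientation being fixed a priori) are sound and, if anything, make the argument slightly more careful than the paper's.
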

% Finally, we can state an analogous result for a tabular MDP, by taking $n = 2, d_S = S, d_A = A$.
\begin{restatable}[]{corollary}{}
\label{thr:sample_complexity_G_tabular}
    Let $\mdp$ a tabular MDP. The result of Theorem~\ref{thr:sample_complexity_G} reduces to
    $
        K = O \big( \log ( S^2 A / \delta) / \epsilon^2 \big).
    $
\end{restatable}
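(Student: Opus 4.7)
The plan is to reduce this corollary directly to Theorem~\ref{thr:sample_complexity_G} by invoking the tabular-as-discrete-MDP embedding described at the end of Section~\ref{sec:preliminaries}. First, I would observe that any tabular MDP with $|\Sspace| = S$ states and $|\Aspace| = A$ actions can be recast as a discrete MDP under the one-hot parameterization by setting $n = 2$, $d_S = S$, and $d_A = A$; each state (resp.\ action) is then represented as a binary vector with exactly one nonzero coordinate. This embedding does not alter the transition law, the generative-model interface used by Algorithm~\ref{alg:learning_g}, or the notion of conditional dependence between features, so the causal structure $\G$ and its $\epsilon$-dependency subgraph $\G_\epsilon$ (Definition~\ref{def:dependency_subgraph}) are well defined in the embedded representation.

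Second, I would apply Theorem~\ref{thr:sample_complexity_G} verbatim to the embedded discrete MDP. Plugging $n = 2$, $d_S = S$, $d_A = A$ into the bound
\begin{equation*}
    K = O\!\left( \frac{n \log(d_S^2 d_A / \delta)}{\epsilon^2} \right)
\end{equation*}
yields $K = O( 2 \log( S^2 A / \delta ) / \epsilon^2 ) = O( \log( S^2 A / \delta ) / \epsilon^2 )$, since the constant factor $n = 2$ is absorbed by $O(\cdot)$. The probabilistic guarantee $Pr(\widehat{\G} \neq \G_\epsilon) \leq \delta$ transfers without modification because it is a statement about the output of Algorithm~\ref{alg:learning_g}, which is oblivious to whether the underlying MDP originated from a tabular or a genuinely factored description.

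The only substantive check I would do is that the independence tester of Lemma~\ref{thr:independence_testing}, whose complexity is stated for distributions on $[n] \times [n]$, remains valid and retains its $O(n \log(1/\delta)/\epsilon^2)$ guarantee when $n = 2$. Since the lemma is stated for arbitrary $n$, the binary case is simply an instance. There is therefore no real obstacle: the corollary is a pure instantiation of Theorem~\ref{thr:sample_complexity_G} at the parameter values dictated by the one-hot embedding, and I would present it as a one-line deduction from the theorem.
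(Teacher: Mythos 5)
Your proposal is correct and matches the paper's argument, which likewise obtains the corollary as a direct consequence of Theorem~\ref{thr:sample_complexity_G} by substituting $n = 2$, $d_S = S$, $d_A = A$ from the one-hot embedding of a tabular MDP. The additional sanity checks you perform (well-definedness of $\G_\epsilon$ under the embedding, validity of the tester at $n=2$) are sound but not needed beyond what the paper states.
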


\subsection{Sample Complexity of Learning the Bayesian Network of a Discrete MDP}
\label{sec:complexity_bayesian_network}
We present as a standalone result an upper bound to the sample complexity of learning the parameters of a Bayesian network $P_\G$ with a fixed structure $\G$. Especially, we refine known results~\citep[\eg][]{dasgupta1997sample} by considering the specific structure $\G$ of an MDP. 
If the structure $\G$ is dense, the number of parameters of $P_\G$ grows exponentially, making the estimation problem mostly intractable. Thus, we consider a $Z$-sparse $\G$ (Assumption~\ref{ass:sparseness}), as in previous works~\cite{dasgupta1997sample}.
Then, we can provide a polynomial sample complexity for the problem of learning the Bayesian network $P_\G$ of a an MDP $\mdp$.
\begin{restatable}[]{theorem}{sampleComplexityP}
\label{thr:sample_complexity_P}
    Let $\mdp$ a discrete MDP with causal structure $\G$, let $\delta \in (0, 1)$, and let $\epsilon > 0$. The Algorithm~\ref{alg:learning_p} returns a Bayesian network $\widehat{P}_\G$ such that $ Pr ( \| \widehat{P}_\G - P_\G \|_1 \geq \epsilon ) \leq \delta $ with a sample complexity
    $
        K = O \big( d_S^3 n^{3Z + 1} \log (  d_S n^Z / \delta ) / \epsilon^2  \big).
    $
\end{restatable}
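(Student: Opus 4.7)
My plan is to exploit the factorization in \eqref{eq:factorization} and reduce the joint estimation of $P_\G$ to a family of per-factor conditional estimation problems, one for every factor $j\in[d_S]$ and every parent configuration $x\in[n]^Z$. The first step is a classical telescoping inequality for product distributions: using $\widehat{P}_\G(Y|X)=\prod_j \widehat{P}_j(Y[j]|X[Z_j])$ and the same factorization of $P_\G$, one obtains for every $X$
\[
\|\widehat{P}_\G(\cdot|X)-P_\G(\cdot|X)\|_1 \;\leq\; \sum_{j=1}^{d_S}\|\widehat{P}_j(\cdot|X[Z_j])-P_j(\cdot|X[Z_j])\|_1.
\]
Hence it is enough to control the per-factor conditional $L_1$ errors uniformly over the $d_S n^Z$ (factor, context) pairs at an accuracy whose sum over $j$ does not exceed $\epsilon$.

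Second, I would track the effective sample count per context. Since Algorithm~\ref{alg:learning_p} draws $X\sim \mathcal{U}_X$ and $|Z_j|\leq Z$ by Assumption~\ref{ass:sparseness}, every $x\in[n]^Z$ value of $X[Z_j]$ is hit with probability exactly $1/n^Z$. A multiplicative Chernoff bound plus a union bound over the $d_S n^Z$ contexts shows that, with probability at least $1-\delta/2$, each context receives at least $m:=K/(2n^Z)$ contributing samples whenever $K\gtrsim n^Z\log(d_S n^Z/\delta)$. Conditional on these counts, I would invoke the standard $L_1$ concentration for the empirical estimator of a distribution on $[n]$: $m$ i.i.d.\ samples suffice to reach error $\epsilon'$ with probability $1-\delta'$ whenever $m\gtrsim(n+\log(1/\delta'))/\epsilon'^2$. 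Choosing $\epsilon'$ to meet the per-factor budget dictated by Step~1 and $\delta'=\delta/(2d_S n^Z)$, and union bounding of that same size, keeps the combined failure probability below $\delta$; solving the chained bounds for $K$ and using $n+\log(1/\delta')\lesssim n\log(d_S n^Z/\delta)$ yields the claimed rate.

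The main obstacle is the two-layer concentration: the per-context count is itself random, so one must first Chernoff-bound those counts and only then apply the empirical-$L_1$ guarantee conditionally on having $m$ clean samples per $(j,x)$ pair, while a single union bound of size $d_S n^Z$ must absorb both bad events simultaneously. A secondary subtlety is the error budgeting forced by the telescoping step: it imposes an $\epsilon/d_S$-type shrinkage at the per-factor level and, combined with the $1/n^Z$ dilution of marginal probabilities coming from uniform sampling, is what drives the polynomial dependence on $d_S$ and $n^Z$ in the announced $\widetilde{O}(d_S^3 n^{3Z+1}\log(d_S n^Z/\delta)/\epsilon^2)$ rate.
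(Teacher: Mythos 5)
Your core argument is the same as the paper's: the telescoping inequality $\|\widehat{P}_\G(\cdot|X)-P_\G(\cdot|X)\|_1\leq\sum_{j}\|\widehat{P}_j(\cdot|X[Z_j])-P_j(\cdot|X[Z_j])\|_1$, a union bound over the $d_S$ factors and the at most $n^Z$ parent configurations each, a per-context accuracy of $\epsilon/(d_S n^Z)$, and the Weissman-type $L_1$ concentration for an empirical distribution on $[n]$; chaining these gives exactly the claimed $O(d_S^3 n^{3Z+1}\log(d_S n^Z/\delta)/\epsilon^2)$ rate, so the proof is correct. The one place you diverge is the sampling model: you assume Algorithm~\ref{alg:learning_p} draws $X\sim\mathcal{U}_X$ passively and therefore insert a two-layer concentration (multiplicative Chernoff on the random per-context counts, then the conditional $L_1$ bound). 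In fact Algorithm~\ref{alg:learning_p} exploits the generative model to \emph{set} $X[Z_j]=x$ and deterministically allocates $K'=\lceil K/(d_S n^Z)\rceil$ draws to every (factor, context) pair, so the counts are not random and the Chernoff layer is unnecessary; the paper goes directly from the union bound to Lemma~\ref{thr:l1_concentration}. Your variant is still a valid proof of the same rate, but strictly speaking it establishes the guarantee for a modified (passive) sampling scheme rather than for Algorithm~\ref{alg:learning_p} itself; as a side remark, passive uniform sampling would actually shave a factor of $d_S$ off the bound, since a single joint draw $(x,y)$ simultaneously contributes one sample to all $d_S$ factors, whereas the algorithm in the paper spends separate queries on each factor.
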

\begin{restatable}[]{corollary}{sampleComplexityPtabular}
\label{thr:sample_complexity_P_tabular}
    Let $\mdp$ a tabular MDP. The result of Theorem~\ref{thr:sample_complexity_P} reduces to
    $
        K = O \big( S^2  2^{2Z} \log ( S 2^Z / \delta) / \epsilon^2 \big).
    $
\end{restatable}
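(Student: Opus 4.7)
The plan is to derive the corollary as a specialization of Theorem~\ref{thr:sample_complexity_P} to the tabular setting, using the one-hot encoding introduced in Section~\ref{sec:preliminaries} where $n = 2$, $d_S = S$, and $d_A = A$. A direct substitution of these parameters into the discrete bound $O(d_S^3 n^{3Z+1} \log(d_S n^Z/\delta)/\epsilon^2)$ yields, after absorbing the $n^1 = 2$ into the constant, the expression $O(S^3 2^{3Z} \log(S 2^Z/\delta)/\epsilon^2)$. This is still loose relative to the stated $O(S^2 2^{2Z} \log(S 2^Z/\delta)/\epsilon^2)$ by a multiplicative factor of $S \cdot 2^Z$, so the proof needs a small refinement on top of the substitution that exploits the extra structure of the one-hot encoding.

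The key tightening I would carry out is based on the following observation: under the one-hot encoding, the $d_S = S$ binary next-state features $(Y[j])_{j \in [S]}$ are constrained by $\sum_{j} Y[j] = 1$, so they do not encode $S$ independent Bernoulli variables but rather a single categorical variable over $S$ outcomes. Re-examining the error decomposition used in Theorem~\ref{thr:sample_complexity_P}, the outer sum $\sum_{j=1}^{d_S}$ of per-feature conditional errors collapses to a single per-transition error over the $S$ possible next states, and in the per-context sample-allocation step one does not need to independently estimate $S$ Bernoulli parameters but only a single categorical of support $S$. Together, these observations are expected to shave off exactly the desired $S \cdot 2^Z$ factor, recovering the rate $O(S^2 2^{2Z} \log(S 2^Z/\delta)/\epsilon^2)$, while the log factor transfers verbatim under the substitution $n \to 2$, $d_S \to S$.

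The main obstacle is to carefully track how the one-hot constraint propagates through the chain of bounds inside the proof of Theorem~\ref{thr:sample_complexity_P}, since that proof presumably treats each feature $Y[j]$ as an independent categorical and then applies a union bound that overcounts in the tabular regime. Verifying that the savings amount to exactly $S \cdot 2^Z$ (rather than, say, $S^2$ or $2^{2Z}$ separately) is the delicate accounting step, and I would also sanity-check that the empirical conditional estimator and the coverage of the $2^Z$ distinct parent contexts per feature behave as expected when $n = 2$, so that no hidden polynomial loss in $S$ or $2^Z$ sneaks back in through intermediate inequalities.
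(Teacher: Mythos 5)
You correctly diagnose that a direct substitution of $n=2$, $d_S=S$, $d_A=A$ into Theorem~\ref{thr:sample_complexity_P} yields $O(S^3 2^{3Z}\log(S2^Z/\delta)/\epsilon^2)$, so a factor of $S\,2^Z$ must be recovered. However, the mechanism you propose for recovering it --- exploiting the one-hot constraint $\sum_j Y[j]=1$ to collapse the $S$ binary next-state features into a single categorical over $S$ outcomes --- is not what the paper does, and it does not go through as described. Collapsing the features destroys the factored structure that keeps the estimation tractable: the resulting single categorical variable is conditioned on the \emph{union} of the parent scopes $\bigcup_j Z_j$, which can contain up to $\min(S+A,\,SZ)$ binary features and hence up to $2^{SZ}$ parent assignments, rather than $2^{Z}$ per feature. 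Moreover, Algorithm~\ref{alg:learning_p} allocates $\lceil K/(d_S n^Z)\rceil$ samples to each (feature, parent-assignment) cell regardless of encoding, so the one-hot constraint does not reduce the number of cells to be estimated, and the paper's tabular proof in fact retains the product factorization $\prod_{j=1}^{S}P_j(Y[j]\mid X[Z_j])$ and never invokes the constraint.

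The actual source of the $S\,2^Z$ saving is analytic rather than structural: the tabular proof is carried out in KL divergence instead of $L_1$. It uses the exact chain rule $d_{KL}(\widehat{P}_\G \,||\, P_\G)=\sum_{j=1}^{S} d_{KL}(\widehat{P}_j \,||\, P_j)$, allocates a KL budget of $\epsilon'/(S2^{Z})$ to each cell, and applies the binary-alphabet concentration of Lemma~\ref{thr:KL_binary_concentration}, $Pr(d_{KL}(\widehat{P}_K\,||\,P)\geq t)\leq 2\exp(-Kt)$, which is \emph{linear} in the threshold $t$; each cell therefore costs $O\big(S2^{Z}\log(\cdot)/\epsilon'\big)$ samples, as opposed to the $O\big((S2^{Z})^2\log(\cdot)/\epsilon^2\big)$ per cell charged by the Weissman $L_1$ bound (Lemma~\ref{thr:l1_concentration}) after splitting the $L_1$ budget, which is quadratic in the inverse threshold. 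Pinsker's inequality is applied only once, at the level of the full joint, with $\epsilon'=\epsilon^2/2$, yielding $K=O(S^2 2^{2Z}\log(S2^Z/\delta)/\epsilon^2)$. Without this change of divergence (or an equivalent device that avoids paying the squared inverse of the per-cell $L_1$ allocation), your outline cannot reach the stated rate.
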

\begin{algorithm}[t!]
    \caption{MDP Bayesian Network Estimation}
    \label{alg:learning_p}
    \begin{algorithmic}[t]
        \STATE \textbf{Input}: sampling model $P(Y|X)$, dependency graph $\G$, batch parameter $K$
        \STATE let $K' = \lceil K / d_S n^Z \rceil$
        \FOR{$j = 1, \ldots, d_S$}
            \STATE let $Z_j$ the scopes $(X[Z_j],Y[j]) \subseteq \G $
            \STATE initialize the counts $N (X[Z_j], Y[j]) = 0$
            \FOR{each value $x \in [n]^{|Z_j|}$}
                \FOR{$k = 1, \ldots, K'$}
                 \STATE draw $y \sim P(Y[j] | X[Z_j] = x)$
                 \STATE increment $N (X[Z_j] = x, Y[j] = y)$
                \ENDFOR
            \ENDFOR
            \STATE compute $\widehat{P}_j (Y[j]| X[Z_j]) = \frac{N (X[Z_j], Y[j])}{ K'}$
        \ENDFOR
        \STATE let $\widehat{P}_\G (Y | X) = \prod_{j = 1}^{d_S} \widehat{P}_j (Y[j] | X[Z_j])$
        \STATE \textbf{Output}: Bayesian network $\widehat{P}_\G$
    \end{algorithmic}
\end{algorithm}

\section{Numerical Validation}
\label{sec:experiments}
We empirically validate the theoretical findings of this work by experimenting on a synthetic example where each environment is a person, and the MDP represents how a series of actions the person can take influences their weight ($W$) and academic performance ($A$). As actions we consider hours of physical training ($P$), hours of sleep ($S$), hours of study ($St$), amount of vegetables in the diet ($D$), and the amount of caffeine intake ($C$). 
The obvious use-case for such a model would be a tracking device that monitors how the actions of a person influence their weight and academic performance and provides personalized recommendations to reach the person's goals. 
While the physiological responses of different individuals can vary, there are some underlying mechanisms shared by all humans, and therefore deemed causal in our terminology. Examples of such causal links are the dependency of weight on the type of diet, and the dependency of academic performance on the number of hours of study. 
Other links, such as the dependency of weight on the amount of caffeine, are present in some individuals, but are generally not shared and therefore not causal.
For simplicity, all variables are treated as discrete with values $0$ (below average), $1$ (average) or $2$ (above average). See Appendix B for details on how transition models of different environments are generated. A class $\class$ of 3 environments is used to estimate the causal transition model. All experiments are repeated 10 times and report the average and standard deviation. 

\begin{figure*}
    \includegraphics[]{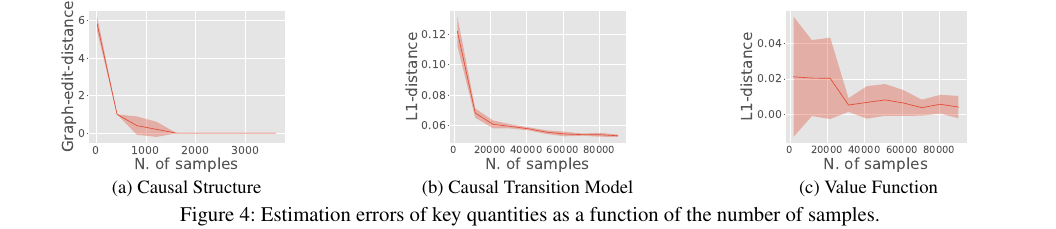}
\end{figure*}

\textbf{Causal Structure Estimation}~~~
We first empirically investigate the graph edit distance between estimated and ground-truth causal structures $\mathrm{GED}(\G, \widehat{\G})$ as a function of number of samples ($K'$ in Algorithm \ref{alg:learning_ctm}). The causal structure is estimated by obtaining the causal graph for each training environment (using a series of independence tests), and taking the intersection of their edges. As expected, the distance converges to zero as we increase the number of samples, and we can recover the exact causal graph (Figure 4a).

\textbf{Causal Transition Model Estimation}~~~
Figure 4b shows the $L_1$-distance between the estimated and ground-truth causal transition model, as a function of the number of samples ($K' + K''$ in Algorithm \ref{alg:learning_ctm}). As the samples grow, the $L_1$-distance shrinks towards $0.05$, which is due to the environments not fully respecting the evenness assumption.

\textbf{Value Function Estimation}~~~
Finally, we investigate whether we can approximate the optimal value function for an unseen environment.  
From Figure 4c, we observe that our algorithm is able to approximate the optimal value function up to a small error with a reasonable number of samples.

\section{Related Work}
\label{sec:related_work}

Finally, we revise the relevant literature and discuss how it relates with our problem formulation and results.

\textbf{Causal Discovery and Bayesian Networks}~~~
On a technical level, our work is related to previous efforts on the sample complexity of causal discovery~\cite{wadhwa2021sample} and Bayesian network estimation~\cite{friedman1996sample, dasgupta1997sample, bhattacharyya2022independence}.
None of these works consider the MDP setting. Instead, we account for the peculiar MDP structure to get sharper rates \wrt a blind application of previous results.

\textbf{Reward-Free RL}~~~
Reward-free RL~\cite{jin2020reward} is akin to a special case of our systematic generalization framework in which the set of MDPs is a singleton~\cite{wang2020reward, zanette2020provably, kaufmann2021adaptive, menard2021fast, zhang2021near, qiu2021reward}.
It is worth comparing our sample complexity result to independent reward-free exploration for each MDP. Let $|\uni| = U$, the latter would require at least $\Omega (U H^3 S^2 A / \epsilon^2)$ samples to obtain systematic generalization up to an $\epsilon$ threshold over a set of tabular MDPs $\uni$~\cite{jin2020reward}. This compares favorably with our rate $\widetilde{O} (M H^6 S^4 A^2 / \epsilon^2)$ whenever $U$ is small, but leveraging the inner structure of $\uni$ becomes crucial as $U$ grows to infinity, while $M$ remains constant. Our approach pays this further generality with the additional error term $\epsilon_\lambda$, which is unavoidable. It is an interesting direction to see whether additional factors in $S, A, H$  are also unavoidable.

\textbf{Hidden Structures in RL}~~~
Previous works have considered learning an hidden structure of the MDP for sample efficient RL~\cite{du2019provably, misra2020kinematic, misra2020provable, agarwal2020flambe}. Their focus is on learning latent representations of states assuming a linear structure in the MDP. This is orthogonal to our work, which instead targets the causal structure shared by infinitely many MDPs, while assuming access to the state features.
Other works~\cite[\eg][]{jin2020provably, cai2020provably, yin2022efficient} study the impact of structural properties of the MDP assuming access to the features. Our structural assumption is strictly more general than the linear structures they consider, but their work could provide useful inspiration to extend our results beyond discrete settings.

\textbf{Model-Based RL}~~~
Model-based RL~\cite{sutton2018reinforcement} prescribes learning an approximate model of the transition dynamics to extract an optimal policy. Theoretical works \citep[\eg][]{jaksch2010near, ayoub2020model} generally focus on the estimation of the approximate value functions obtained through the learned model, rather than the estimation of the model itself. A notable exception is \cite{tarbouriech2020active}, which targets point-wise high probability guarantees on the model estimation as we do in Lemma~\ref{thr:sample_complexity_ctm},~\ref{thr:sample_complexity_ctm_tabular}. However, they address the model estimation of a single MDP, instead of the shared transition dynamics of an infinite set of MDPs that we target in this paper. 

\textbf{Factored MDPs}~~~
The factored MDP formalism~\cite{kearns1999efficient} allows encoding transition dynamics that are the product of multiple independent factors. This is closely related to how we define the causal transition model in~\eqref{eq:factorization}, which can be seen as a factored MDP. Previous works have considered learning in factored MDPs, either assuming full knowledge of the factorization~\cite{delgado2011efficient, xu2020reinforcement, talebi2020improved, tian2020towards}, or by estimating its structure from data~\cite{strehl2007efficient, vigorito2009incremental, chakraborty2011structure, osband2014near, rosenberg2021oracle}. To the best of our knowledge, none of the existing works have considered the factored MDP framework in combination with a reward-free setting and systematic generalization, which bring unique challenges to the identification of the underlying factorization and the estimation of the transition factors.

\textbf{Causal RL}~~~
Previous works~\citep{zhang2020invariant,tomar2021model, gasse2021causal, feng2022factored} address model-based RL from a causal perspective. The motivations behind~\cite{zhang2020invariant} are especially similar to ours, but they have come to different structural assumptions, which lead to non-overlapping results.
To the best of our knowledge, we are the first to prove a polynomial sample complexity for causal model-based RL in systematic generalization. Similarly to our paper, \citet{feng2022factored} employ causal structure learning to build a factored representation of the MDP, but they tackle non-stationary changes in the environment instead of systematic generalization.
Finally,~\citet{lu2021efficient} show how to exploit a known causal representation for sample efficient RL, which can complement our work on how to learn such representation.

%\section{Conclusion}
%\label{sec:conclusion}
%In this work, we provided a blueprint for addressing systematic generalization in RL through a causal viewpoint.
%A comprehensive analysis of the statistical barriers of the problem may be established in future works.
%Interesting directions also include overcoming the generative model to learn the causal structure from online interactions, investigating whether a reduction in the sample complexity can be obtained with the addition of interventional data, a relaxation of the diversity and evenness assumptions, possibly through active selection of the environments in $\uni$ to interact with.
%
%Although we are leaving a number of open directions to be investigated, we believe that this work can provide a fundamental step to integrate causal representations with reinforcement learning in the pursuit of generalization.

\nocite{*}
\bibliography{biblio}
\bibliographystyle{aaai23}

%%%%%%%%%%%%%%%%%%%%%%%%%%%%%%%%%%%%%%%%%%%%%%%%%%%%%%%%%%%%%%%%%%%
% DELETE THIS PART. DO NOT PLACE CONTENT AFTER THE REFERENCES!
%%%%%%%%%%%%%%%%%%%%%%%%%%%%%%%%%%%%%%%%%%%%%%%%%%%%%%%%%%%%%%%%%%%
\clearpage
\onecolumn
\appendix

\section{Proofs}
\label{appendix:proofs}
\subsection*{Proofs of Section~\ref{sec:causal_perspective}}
\observationalDataSufficiency*
\begin{proof}
    First, recall that with observational data alone, a causal graph can be identified up to its Markov equivalence class~\cite{2014buhlmann}. This means that its skeleton and v-structure are properly identified, meanwhile determining the edge orientations requires interventional data in the general case. Since in the considered causal graph $\mathcal{G}$ the edges orientations are determined a priori (as they follow the direction of time), the causal graph can be entirely determined by using only observational data.
\end{proof}

\subsection*{Proofs of Section~\ref{sec:complexity_generalization}: Causal Transition Model Estimation}

Before reporting the proof of the main result in Theorem~\ref{thr:sample_complexity}, it is worth considering a set of lemmas that will be instrumental to the main proof.

First, we provide an upper bound to the L1-distance between the Bayesian network $P_\G$ over a given structure $\G$ and the Bayesian network $P_{\G_\epsilon}$ over the structure $\G_\epsilon$, which is the $\epsilon$-dependency subgraph of $\G$ as defined in Definition~\ref{def:dependency_subgraph}.

\begin{lemma}
\label{thr:l1_norm_bayesian_networks}
    Let $\G$ a $Z$-sparse dependency graph, and let $\G_\epsilon$ its corresponding $\epsilon$-dependence subgraph for a threshold $\epsilon > 0$. The L1-distance between the Bayesian network $P_{\G}$ over $\G$ and the Bayesian network $P_{\G_\epsilon}$ over $\G_\epsilon$ can be upper bounded as
    \begin{equation*}
        \| P_{\G} - P_{\G_\epsilon} \|_1 \leq d_S Z \epsilon.
    \end{equation*}
\end{lemma}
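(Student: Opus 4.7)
The natural strategy is a hybrid/telescoping argument that replaces one conditional factor at a time, followed by a second telescoping inside each factor that removes one ``weak'' parent at a time. Specifically, by Equation~\eqref{eq:factorization} both networks admit the factorizations
\[
P_{\G}(Y|X) = \prod_{j=1}^{d_S} P_j\bigl(Y[j] \mid X[Z_j]\bigr),
\qquad
P_{\G_\epsilon}(Y|X) = \prod_{j=1}^{d_S} P_j^{\epsilon}\bigl(Y[j] \mid X[Z_j^{\epsilon}]\bigr),
\]
where $Z_j^{\epsilon} \subseteq Z_j$ collects only those parents whose edge into $Y[j]$ survived the $\epsilon$-threshold of Definition~\ref{def:dependency_subgraph}. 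By $Z$-sparseness (Assumption~\ref{ass:sparseness}), $|Z_j| \le Z$ and hence $|Z_j \setminus Z_j^{\epsilon}| \le Z$ for every $j \in [d_S]$.

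\textbf{Outer telescoping (over nodes).} First I would define the hybrid distributions
\[
P^{(k)}(Y|X) \;=\; \prod_{j \le k} P_j^{\epsilon}\bigl(Y[j] \mid X[Z_j^{\epsilon}]\bigr) \,\prod_{j > k} P_j\bigl(Y[j] \mid X[Z_j]\bigr),
\]
so $P^{(0)} = P_{\G}$ and $P^{(d_S)} = P_{\G_\epsilon}$. Applying the triangle inequality along this chain, and using the standard fact that for two product distributions that differ in a single conditional factor the $L_1$-distance of the joint is upper bounded by the (expected) $L_1$-distance of that factor, I obtain
\[
\| P_{\G} - P_{\G_\epsilon} \|_1 \;\le\; \sum_{j=1}^{d_S} \EV_{X}\bigl[\, \| P_j(\cdot \mid X[Z_j]) - P_j^{\epsilon}(\cdot \mid X[Z_j^{\epsilon}]) \|_1 \,\bigr].
\]

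\textbf{Inner telescoping (over parents of a fixed node).} For a fixed $j$, I would enumerate the parents in $Z_j \setminus Z_j^{\epsilon} = \{z_1, \dots, z_m\}$ with $m \le Z$ and build an analogous hybrid that marginalizes these parents one at a time, producing a telescoping sum of $m$ terms. For each such term, the factor that is being dropped corresponds to an edge $(X[z_\ell], Y[j]) \in \G \setminus \G_\epsilon$; by the very definition of the $\epsilon$-dependency subgraph, the pairwise joint of $X[z_\ell]$ and $Y[j]$ is within $L_1$-distance $\epsilon$ of a product measure, so marginalizing $X[z_\ell]$ out of the conditional changes it by at most $\epsilon$ in $L_1$. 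Hence each inner term contributes at most $\epsilon$ and the sum is bounded by $Z\epsilon$.

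\textbf{Combining.} Plugging the per-node bound $Z\epsilon$ into the outer telescoping and summing over $j \in [d_S]$ yields $\|P_{\G} - P_{\G_\epsilon}\|_1 \le d_S Z \epsilon$, as desired. The main obstacle is the inner step: the definition of $\G_\epsilon$ is phrased pairwise and marginally, whereas the quantity that actually appears after the outer telescoping is a conditional on the remaining parents. The crux of the argument will be to justify carefully that an edge failing the $\epsilon$-dependency test indeed lets us drop that parent from the conditional at $L_1$-cost $\epsilon$ (absorbing the marginal-vs.-conditional gap into the infimum over $Q$ in Definition~\ref{def:dependency_subgraph}); everything else is standard telescoping arithmetic.
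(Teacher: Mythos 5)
Your proposal follows essentially the same route as the paper: a product-to-sum step over the $d_S$ conditional factors, followed by an inner telescoping that removes the weak parents of each $Y[j]$ one at a time, charges $\epsilon$ to each removal via Definition~\ref{def:dependency_subgraph}, and bounds the number of removed parents by $Z$ using sparseness. The marginal-versus-conditional subtlety you flag at the end is a real gap, but the paper's own proof invokes Definition~\ref{def:dependency_subgraph} at exactly the same point without resolving it either, so your argument is faithful to (and no less rigorous than) the published one.
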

\begin{proof}
    The proof is based on the fact that every edge $(X_i, Y_j)$ such that $(X_i, Y_j) \in \G$ and $(X_i, Y_j) \notin \G_{\epsilon}$ corresponds to a weak conditional dependence (see Definition~\ref{def:dependency_subgraph}), which means that $ \| P_{Y_j | X_i} - P_{Y_j} \|_1 \leq \epsilon$. 
    
    We denote with $Z_j$ the scopes of the parents of the node $Y[j]$ in $\G$, \ie $\Pa_\G (Y[j]) = X[Z_j]$, and with $Z_{j, \epsilon}$ the scopes of the parents of the node $Y[j]$ in $\G_{\epsilon}$, \ie $\Pa_{\G_\epsilon} (Y[j]) = X[Z_{j,\epsilon}]$. As a direct consequence of Definition~\ref{def:dependency_subgraph}, we have $Z_{j, \epsilon} \subseteq Z_j$ for any $j \in d_S$, and we can write
    \begin{equation*}
    \begin{aligned}
        P_\G (Y | X) 
        = \prod_{j = 1}^{d_S} P_j (Y[j] \ | \ X[Z_j])
        = \prod_{j = 1}^{d_S} P_j (Y[j] \ | \ X[Z_{j,\epsilon}], X[Z_j \setminus Z_{j,\epsilon}]),
        \\
        P_{\G_{\epsilon}} (Y | X) 
        = \prod_{j = 1}^{d_S} P_j (Y[j] \ | \ X[Z_{j,\epsilon}]).
    \end{aligned}
    \end{equation*}
    Then, we let $Z_j \setminus Z_{j, \epsilon} = [I]$ overwriting the actual indices for the sake of clarity, and we derive
    \begin{align}
        \| P_\G - P_{\G_\epsilon} \|_1
        &\leq \sum_{j = 1}^{d_S} \Big\| P_j (Y[j] \ | \ X[Z_{j, \epsilon}], \cup_{i = 1}^I X[i]) - P_j (Y[j] \ | \ X[Z_{j, \epsilon}] ) \Big\|_1 \label{eq:13} \\ 
        &\leq \sum_{j = 1}^{d_S} \sum_{i' = 1}^{I} \Big\| P_j (Y[j] \ | \ X[Z_{j,\epsilon}], \cup_{i = i'}^I X[i]) - P_j (Y[j] \ | \ X[Z_{j, \epsilon}], \cup_{i = i' + 1}^I X[i] ) \Big\|_1 \label{eq:14} \\
        &\leq \sum_{j = 1}^{d_S} \sum_{i' = 1}^{I} \epsilon \leq d_S Z \epsilon, \label{eq:15}
    \end{align}
    in which we employed the property $\| \mu - \nu \|_1 \leq \| \prod_i \mu_i - \prod_i \nu_i \|_1 \leq \sum_{i} \| \mu_i - \nu_i \|_1$ for the L1-distance between product distributions $\mu = \prod_i \mu_i, \nu = \prod_i \nu_i$ to write~\eqref{eq:13}, we repeatedly applied the triangle inequality $\| \mu - \nu \|_1 \leq \| \mu - \rho \|_1 + \| \rho - \nu \|_1$ to get~\eqref{eq:14} from~\eqref{eq:13}, we upper bounded each term of the sum in~\eqref{eq:14} with $\epsilon$ thanks to Definition~\ref{def:dependency_subgraph}, and we finally employed the $Z$-sparseness Assumption~\ref{ass:sparseness} to upper bound $I$ with $Z$ in~\eqref{eq:15}.
\end{proof}

Next, we provide a crucial sample complexity result for a provably efficient estimation of a Bayesian network $\widehat{P}_{\widehat{\G}}$ over an estimated $\epsilon$-dependency subgraph $\widehat{\G}$, which relies on both the causal structure estimation result of Theorem~\ref{thr:sample_complexity_G} and the Bayesian network estimation result of Theorem~\ref{thr:sample_complexity_P}.

\begin{lemma}
\label{thr:sample_complexity_PG}
    Let $\mdp$ be a discrete MDP, let $\class = \{ \mdp \}$ be a singleton class, let $\delta \in (0, 1)$, and let $\epsilon > 0$. The Algorithm~\ref{alg:learning_ctm} returns a Bayesian network $\widehat{P}_{\widehat{\G}}$ such that $ Pr ( \| \widehat{P}_{\widehat{\G}} - P_\G \|_1 \geq \epsilon ) \leq \delta $ with a sample complexity
    \begin{equation*}
        K = O \Bigg( \frac{d_S^3  Z^2  n^{3Z + 1}  \log \big( \frac{ 4 d_S^2 d_A n^Z }{ \delta} \big) }{ \epsilon^2} \Bigg).
    \end{equation*}
\end{lemma}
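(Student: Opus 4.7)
The plan is to decompose the target error into a \emph{structural} mismatch term and a \emph{parametric} estimation term, to control each of them via the ancillary results of Sections~\ref{sec:complexity_cuasal_discovery} and~\ref{sec:complexity_bayesian_network}, and finally to aggregate the two sample budgets and confidences via a union bound. Concretely, letting $P_{\widehat{\G}}$ denote the ground-truth Bayesian network that shares the estimated structure $\widehat{\G}$ but uses the exact conditional factors, I would triangle-split
\begin{equation*}
\| \widehat{P}_{\widehat{\G}} - P_\G \|_1 \;\leq\; \underbrace{\| \widehat{P}_{\widehat{\G}} - P_{\widehat{\G}} \|_1}_{\text{parametric}} \;+\; \underbrace{\| P_{\widehat{\G}} - P_\G \|_1}_{\text{structural}},
\end{equation*}
and drive each summand below $\epsilon/2$ with probability at least $1-\delta/2$.

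For the structural term, I would invoke Theorem~\ref{thr:sample_complexity_G} (the causal-structure-estimation rate from Algorithm~\ref{alg:learning_g}) with threshold $\epsilon_1 := \epsilon/(2 d_S Z)$ and confidence $\delta/2$, ensuring $\widehat{\G} = \G_{\epsilon_1}$ with probability at least $1 - \delta/2$. Conditioning on that event, Lemma~\ref{thr:l1_norm_bayesian_networks} directly gives $\| P_{\widehat{\G}} - P_\G \|_1 \leq d_S Z \epsilon_1 = \epsilon/2$. The sample budget required is
\begin{equation*}
K' \;=\; O\!\bigg(\frac{n\,\log(d_S^2 d_A/\delta)}{\epsilon_1^{\,2}}\bigg) \;=\; O\!\bigg(\frac{d_S^2 Z^2 n\,\log(d_S^2 d_A/\delta)}{\epsilon^2}\bigg),
\end{equation*}
which matches the first batch size set by Algorithm~\ref{alg:learning_ctm} (with $M=1$).

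For the parametric term, I would instantiate Theorem~\ref{thr:sample_complexity_P} on the realized graph $\widehat{\G}$, which, being a subgraph of $\G$, inherits $Z$-sparseness from Assumption~\ref{ass:sparseness}, with tolerance $\epsilon/2$ and confidence $\delta/2$. This yields $\| \widehat{P}_{\widehat{\G}} - P_{\widehat{\G}} \|_1 \leq \epsilon/2$ with probability at least $1 - \delta/2$, at a cost
\begin{equation*}
K'' \;=\; O\!\bigg(\frac{d_S^3 n^{3Z+1}\,\log(d_S n^Z/\delta)}{\epsilon^2}\bigg),
\end{equation*}
which matches the second batch size in Algorithm~\ref{alg:learning_ctm}. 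A union bound over the two good events gives the claimed high-probability guarantee, and adding $K = K' + K''$ (with the second summand dominating) and collecting the two logarithmic terms into $\log(4 d_S^2 d_A n^Z/\delta)$ yields exactly the stated rate.

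The step I expect to require the most care is the use of Theorem~\ref{thr:sample_complexity_P} on the \emph{random} graph $\widehat{\G}$ rather than on a fixed one. This can be resolved by working on the high-probability event $\{\widehat{\G} = \G_{\epsilon_1}\}$ and then applying the fixed-structure result to the deterministic graph $\G_{\epsilon_1}$, so that the per-scope parameter-counting argument underlying Algorithm~\ref{alg:learning_p} is applied to a fixed collection of scopes $Z_j$. The other delicate bookkeeping is the calibration of $\epsilon_1$: the factor $d_S Z$ coming from Lemma~\ref{thr:l1_norm_bayesian_networks} is what propagates into the $d_S^2 Z^2$ multiplier inside $K'$, but since $K''$ dominates $K'$ this does not affect the overall polynomial rate. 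Everything else is a routine union bound and constant re-absorption.
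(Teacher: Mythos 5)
Your proposal is correct and follows essentially the same route as the paper's proof: the same triangle split into a structural and a parametric part (the paper merely writes it as three $\epsilon/3$ terms by inserting $P_{\G_{\epsilon'}}$ as an explicit pivot), the same calibration $\epsilon_1=\Theta(\epsilon/(d_S Z))$ via Lemma~\ref{thr:l1_norm_bayesian_networks}, and the same invocations of Theorem~\ref{thr:sample_complexity_G} and Theorem~\ref{thr:sample_complexity_P} combined by a union bound. Your explicit conditioning on the event $\{\widehat{\G}=\G_{\epsilon_1}\}$ to justify applying the fixed-structure result to the random graph is exactly the resolution the paper uses implicitly.
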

\begin{proof}
    We aim to obtain the number of samples $K = K' + K''$ for which Algorithm~\ref{alg:learning_ctm} is guaranteed to return a Bayesian network estimate $\widehat{P}_{\widehat{\G}}$ over a causal structure estimate $\widehat{\G}$ such that $Pr ( \| \widehat{P}_{\widehat{\G}} - P_\G \|_1 \geq \epsilon ) \leq \delta$ in a setting with a singleton class of discrete MDPs. First, we derive the following decomposition of the error
    \begin{equation}
        \| \widehat{P}_{\widehat{\G}} - P_\G \|_1
        \leq \| \widehat{P}_{\widehat{\G}} \pm P_{\widehat{\G}} \pm P_{{\G}_{\epsilon'}} - P_\G \|_1
        \leq \| \widehat{P}_{\widehat{\G}} - P_{\widehat{\G}} \|_1
        + \| P_{\widehat{\G}} - P_{\G_{\epsilon'}} \|_1
        + \| P_{\G_{\epsilon'}} - P_\G \|_1
        \label{eq:error_decomposition}
    \end{equation}
    in which we employed the triangle inequality $\| \mu - \nu \|_1 \leq \| \mu - \rho \|_1 + \| \rho - \nu \|_1$. Then, we can write
    \begin{equation*}
        Pr \big( \| \widehat{P}_{\widehat{\G}} - P_\G \|_1 \geq \epsilon \big)
        \leq \underbrace{Pr \Big( \| \widehat{P}_{\widehat{\G}} - P_{\widehat{\G}} \|_1 \geq \frac{\epsilon}{3} \Big)}_{\text{Bayesian network estimation } (\star)}
        + \underbrace{Pr \Big( \| P_{\widehat{\G}} - P_{\G_{\epsilon'}} \|_1 \geq \frac{\epsilon}{3}\Big)}_{\text{causal structure estimation } (\bullet)}
        + \underbrace{Pr \Big( \| P_{\G_{\epsilon'}} - P_\G \|_1 \geq \frac{\epsilon}{3} \Big)}_{\text{Bayesian network subgraph } (\diamond)}
    \end{equation*}
    through the decomposition~\eqref{eq:error_decomposition} and a union bound to isolate the three independent sources of error $(\star), (\bullet), (\diamond)$. To upper bound the latter term $(\diamond)$ with 0, we invoke Lemma~\ref{thr:l1_norm_bayesian_networks} to have $d_s Z \epsilon' \leq \frac{\epsilon}{3}$, which gives $\epsilon' \leq \frac{\epsilon}{3 d_S Z}$. Then, we consider the middle term $(\bullet)$, for which we can write
    \begin{equation}
        Pr \bigg( \| P_{\widehat{\G}} - P_{\G_{\epsilon'}} \|_1 \geq \frac{\epsilon}{3} \bigg) \leq Pr \big( \widehat{\G} \neq \G_{\epsilon'} \big).
        \label{eq:causal_estimation}
    \end{equation}
    We can now upper bound $(\bullet) \leq \delta / 2$ through~\eqref{eq:causal_estimation} by invoking Theorem~\ref{thr:sample_complexity_G} with threshold $\epsilon' = \frac{\epsilon}{3 d_S Z}$ and confidence $\delta' = \frac{\delta}{2}$, which gives
    \begin{equation}
        K' = C' \bigg( \frac{ d_S^{4 / 3} Z^{4 / 3} n \log^{1 / 3} (2 d_S^2 d_A / \delta) }{ \epsilon^{4 / 3}} + \frac{d_S^2 Z^2 n \log^{1 / 2} (2 d_S^2 d_A / \delta) + \log(2 d_S^2 d_A / \delta)}{\epsilon^2} \bigg).
        \label{eq:k1}
    \end{equation}
    Next, we can upper bound $(\star) \leq \delta / 2$ by invoking Theorem~\ref{thr:sample_complexity_P} with threshold $\epsilon' = \frac{\epsilon}{3}$ and confidence $\delta' = \frac{\delta}{2}$, which gives
    \begin{equation}
        K'' = C'' \bigg( \frac{  d_S^3  n^{3Z + 1}  \log (4 d_S n^Z / \delta)}{\epsilon^2 } \bigg).
        \label{eq:k2}
    \end{equation}
    Finally, through the combination of~\eqref{eq:k1} and~\eqref{eq:k2}, we can derive the sample complexity that guarantees $ Pr ( \| \widehat{P}_{\widehat{\G}} - P_\G \|_1 \geq \epsilon ) \leq \delta $ under the assumption $\epsilon^{4 / 3} \ll \epsilon^2$, \ie
    \begin{equation*}
        K = K' + K'' \leq \frac{d_S^3 Z^2 n^{3Z + 1} \log \Big( \frac{4 d_S^2 d_A n^Z}{\delta} \Big)}{\epsilon^2},
    \end{equation*}
    which concludes the proof.
\end{proof}

Whereas Lemma~\ref{thr:sample_complexity_PG} is concerned with the sample complexity of learning the Bayesian network of a singleton class, we can now extend the result to account for a class $\class$ composed of $M$ discrete MDPs.

\sampleComplexityCTM*
% \begin{lemma}
% \label{thr:sample_complexity_ctm}
%     Let $\class = \{ \mdp_i \}_{i = 1}^M$ be a class of $M$ discrete MDPs, let $\delta \in (0, 1)$, and let $\epsilon > 0$. The Algorithm~\ref{alg:learning_ctm} returns a Bayesian network $\widehat{P}_{\widehat{\G}}$ such that $ Pr ( \| \widehat{P}_{\widehat{\G}} - P_\G \|_1 \geq \epsilon ) \leq \delta $ with a sample complexity
%     \begin{equation*}
%         K = O \Bigg( \frac{M \ d_S^3 \ Z^2 \ (nm)^{3Z + 1} \ \log \big( \frac{ 4 M d_S^2 d_A (nm)^Z }{ \delta} \big) }{ \epsilon^2} \Bigg).
%     \end{equation*}
% \end{lemma}
\begin{proof}
    We aim to obtain the number of samples $K = M K' + K''$ for which Algorithm~\ref{alg:learning_ctm} is guaranteed to return a Bayesian network estimate $\widehat{P}_{\widehat{\G}}$ over a causal structure estimate $\widehat{\G}$ such that $Pr ( \| \widehat{P}_{\widehat{\G}} - P_\G \|_1 \geq \epsilon ) \leq \delta$ in a setting with a class of $M$ discrete MDPs. First, we can derive an analogous decomposition as in~\eqref{eq:error_decomposition}, such that we have
    \begin{equation*}
        Pr \big( \| \widehat{P}_{\widehat{\G}} - P_\G \|_1 \geq \epsilon \big)
        \leq \underbrace{Pr \Big( \| \widehat{P}_{\widehat{\G}} - P_{\widehat{\G}} \|_1 \geq \frac{\epsilon}{3} \Big)}_{\text{Bayesian network estimation } (\star)}
        + \underbrace{Pr \Big( \| P_{\widehat{\G}} - P_{\G_{\epsilon'}} \|_1 \geq \frac{\epsilon}{3}\Big)}_{\text{causal structure estimation } (\bullet)}
        + \underbrace{Pr \Big( \| P_{\G_{\epsilon'}} - P_\G \|_1 \geq \frac{\epsilon}{3} \Big)}_{\text{Bayesian network subgraph } (\diamond)}
    \end{equation*}
    through a union bound. Crucially, the terms $(\star), (\diamond)$ are unaffected by the class size, which leads to $K'' = \eqref{eq:k2}$ by upper bounding $(\star)$, and $\epsilon' \leq \frac{\epsilon}{3 d_S Z}$ by upper bounding $(\diamond)$, exactly as in the proof of Lemma~\ref{thr:sample_complexity_PG}. Instead, the number of samples $K'$ has to guarantee that $(\bullet) = Pr ( \| P_{\widehat{\G}} - P_{\G_{\epsilon'}} \|_1 \geq \epsilon / 3 ) \leq \delta / 2$, where the causal structure $\G_{\epsilon'}$ is now the intersection of the causal structures of the single class components $\mdp_i$, \ie $\G_{\epsilon'} = \cap_{i = 1}^M \G_{\epsilon', i}$. Especially, we can write
    \begin{equation}
        (\bullet) = Pr \Big( \| P_{\widehat{\G}} - P_{\G_{\epsilon'}} \|_1 \geq \frac{\epsilon}{3} \Big)
        \leq Pr \Big( \widehat{\G} \neq \G_{\epsilon'} \Big)
        \leq Pr \bigg( \bigcup_{i = 1}^M \widehat{\G}_i \neq \G_{\epsilon', i} \bigg)
        \leq \sum_{i = 0}^M Pr \Big( \widehat{\G}_i \neq \G_{\epsilon', i} \Big),
        \label{eq:causal_structures}
    \end{equation}
    through a union bound on the estimation of the single causal structures $\widehat{\G}_i$. Then, we can upper bound $(\bullet) \leq \delta / 2$ through~\eqref{eq:causal_structures} by invoking Theorem~\ref{thr:sample_complexity_G} with threshold $\epsilon' = \frac{\epsilon}{3 d_S Z}$ and confidence $\delta' = \frac{\delta}{2M}$, which gives
    \begin{equation}
        K' = C' \bigg( \frac{ d_S^{4 / 3}  Z^{4 / 3}  n  \log^{1 / 3} (2 M d_S^2 d_A / \delta) }{ \epsilon^{4 / 3}} + \frac{d_S^2  Z^2  n  \log^{1 / 2} (2 M d_S^2 d_A / \delta) + \log(2 M d_S^2 d_A / \delta)}{\epsilon^2} \bigg).
        \label{eq:k1m}
    \end{equation}
    Finally, through the combination of~\eqref{eq:k1m} and~\eqref{eq:k2}, we can derive the sample complexity that guarantees $ Pr ( \| \widehat{P}_{\widehat{\G}} - P_\G \|_1 \geq \epsilon ) \leq \delta $ under the assumption $\epsilon^{4 / 3} \ll \epsilon^2$, \ie
    \begin{equation*}
        K = M K' + K'' \leq \frac{M d_S^3  Z^2  n^{3Z + 1}  \log \Big( \frac{4 M d_S^2 d_A n^Z}{\delta} \Big)}{\epsilon^2},
    \end{equation*}
    which concludes the proof.
\end{proof}

It is now straightforward to extend Lemma~\ref{thr:sample_complexity_ctm} for a class $\class$ composed of $M$ tabular MDPs.

\sampleComplexityCTMtabular*
% \begin{lemma}
% \label{thr:sample_complexity_cmt_tabular}
%     Let $\class = \{ \mdp_i \}_{i = 1}^M$ be a class of $M$ tabular MDPs. The sample complexity of Lemma~\ref{thr:sample_complexity_ctm} reduces to
%     \begin{equation*}
%         K = O \Bigg( \frac{M \ S^2 \ Z^2 \ 4^{2Z} \ \log \big( \frac{ 4 M S^2 A 4^Z }{ \delta} \big) }{ \epsilon^2} \Bigg).
%     \end{equation*}
% \end{lemma}
\begin{proof}
    To obtain $K = M K' + K''$, we follows similar steps as in the proof of Lemma~\ref{thr:sample_complexity_ctm}, to have the usual decomposition of the event $Pr ( \| \widehat{P}_{\widehat{\G}} - P_\G \|_1 \geq \epsilon ) $ in the $(\star), (\bullet), (\diamond)$ terms. We can deal with $(\diamond)$ as in Lemma~\ref{thr:sample_complexity_ctm} to get $\epsilon' \leq \frac{\epsilon}{3 S Z}$. Then, we upper bound $(\bullet) \leq \delta / 2$ by invoking Corollary~\ref{thr:sample_complexity_G_tabular} (instead of Theorem~\ref{thr:sample_complexity_G}) with threshold $\epsilon' = \frac{\epsilon}{3 S Z}$ and confidence $\delta' = \frac{\delta}{2M}$, which gives
    \begin{equation}
        K' = C' \bigg( \frac{ S^{4 / 3}  Z^{4 / 3} \log^{1 / 3} (2 M S^2 A / \delta) }{ \epsilon^{4 / 3}} + \frac{S^2  Z^2  \log^{1 / 2} (2 M S^2 A / \delta) + \log(2 M S^2 A / \delta)}{\epsilon^2} \bigg).
        \label{eq:k1m_tabular}
    \end{equation}
    Similarly, we upper bound $(\star) \leq \delta / 2$ by invoking Corollary~\ref{thr:sample_complexity_P_tabular} (instead of Theorem~\ref{thr:independence_testing}) with threshold $\epsilon' = \frac{\epsilon}{3}$ and confidence $\delta' = \frac{\delta}{2}$, which gives
    \begin{equation}
        K'' = \frac{ 18  S^2 2^{2Z}  \log( 4 S 2^Z / \delta)}{ \epsilon^2 }.
        \label{eq:k2_tabular}
    \end{equation}
    Finally, we combine~\ref{eq:k1m_tabular} with~\ref{eq:k2_tabular} to obtain
    \begin{equation*}
        K = M K' + K'' \leq \frac{M S^2 Z^2 2^{2Z} \log \big( \frac{ 4 M S^2 A 2^Z }{ \delta} \big) }{ \epsilon^2} 
    \end{equation*}
\end{proof}

\subsection*{Proofs of Section~\ref{sec:complexity_generalization}: Planning}

\sampleComplexity*
\begin{proof}
    Consider the MDPs with transition model $P$ and $\widehat{P}_{\widehat{\G}}$. We refer to the respective optimal policies as $\pi^*$ and $\widehat{\pi}^*$. Moreover, since the reward $r$ is fixed, we remove it from the expressions for the sake of clarity, and refer with $\widehat{V}$ to the value function of the MDP with transition model $\widehat{P}_{\widehat{\G}}$. As done in~\citep[][Theorem 3.5]{jin2020reward}, we can write the following decomposition with $V^* := V^{\pi^*}$,
    \begin{align*}
         \EV_{s_1 \sim P}\Big[V^*_1 (s_1) - V^{\widehat{\pi}}_1 (s_1)\Big] &\leq \underbrace{\Big|\EV_{s_1 \sim P}\Big[V^*_1 (s_1) - \widehat{V}^{\widehat{\pi}^*}_1 (s_1)\Big]\Big|}_{\text{evaluation error}} + \underbrace{\EV_{s_1 \sim P}\Big[\widehat{V}^*_1 (s_1) - \widehat{V}^{\widehat{\pi}^*}_1 (s_1)\Big]}_{\text{$\leq 0$ by def.}}\\
         &+ \underbrace{\EV_{s_1 \sim P}\Big[\widehat{V}^{\widehat{\pi}^*}_1 (s_1) - \widehat{V}^{\widehat{\pi}}_1 (s_1)\Big]}_{\text{optimization error}} + \underbrace{\Big| \EV_{s_1 \sim P}\Big[\widehat{V}^{\widehat{\pi}}_1 (s_1) - V^{\widehat{\pi}}_1(s_1)\Big]\Big|}_{\text{evaluation error}}\\
         &\leq \underbrace{2 n^{Z + 1} H^3\epsilon'}_{\epsilon} + \underbrace{2n^{2Z + 1}d_S H^3\lambda}_{\epsilon_\lambda}
    \end{align*}
    where in the last step we have set to 0 the approximation due to the planning oracle assumption, and we have bounded the evaluation errors according to Lemma \ref{thr:evaluation_error_discrete}. In order to get $2 n^{Z + 1}H^3\epsilon' = \epsilon$ we have to set $\epsilon' = \frac{\epsilon}{2n^{Z + 1}H^3}$. Considering the sample complexity result in Lemma \ref{thr:sample_complexity_ctm} the final sample complexity will be
    \begin{equation*}
        K = O \Bigg( \frac{M d_S^3 Z^2 n^{3Z + 1} \log \big( \frac{ 4 M d_S^2 d_A n^Z }{ \delta} \big) }{ (\epsilon')^2} \Bigg) = O \Bigg( \frac{4  M d_S^3 Z^2 n^{5Z + 3} H^6 \log \big( \frac{ 4 M d_S^2 d_A n^Z }{ \delta} \big) }{ \epsilon^2} \Bigg).
    \end{equation*}
\end{proof}

\begin{lemma}
    \label{thr:evaluation_error_discrete}
    Under the preconditions of Theorem~\ref{thr:sample_complexity}, with probability $1-\delta$, for any reward function $r$ and policy $\pi$, we can bound the value function estimation error as follows.
    \begin{equation}
        \Big| \EV_{s \sim P}\Big[\widehat{V}^\pi_{1,r}(s) - V_{1,r}^\pi(s) \Big] \Big| \leq \underbrace{n^{Z + 1} H^3\epsilon'}_{\epsilon} + \underbrace{n^{2Z + 1} d_S H^3\lambda}_{\epsilon_\lambda}
    \end{equation}
    where $\widehat{V}$ is the value function of the MDP with transition model $\widehat{P}_{\widehat{\G}}$, $\epsilon'$ is the approximation error between $\widehat{P}_{\widehat{\G}}$ and $P_\G$ studied in Lemma \ref{thr:sample_complexity_ctm}, and $\lambda$ stands for the $\lambda$-sufficiency parameter of $P_\G$.
\end{lemma}
\begin{proof}
    The proof will be along the lines of that of Lemma 3.6 in~\citep[]{jin2020reward}. We first recall ~\citep[][Lemma E.15]{dann2018unifying}, which we restate in Lemma \ref{thr:V_difference}. In this proof, we consider an environment specific true MDP $\mdp$ with transition model $P$, and an MDP $\widehat{\mdp}$ that has as transition model the estimated causal transition model $\widehat{P}_{\widehat{\G}}$. In the following, the expectations will be \wrt $P$. Moreover, since the reward $r$ is fixed, we remove it from the expressions for the sake of clarity. We can start deriving
    \begin{align}
        \Big| \EV_{s \sim P}\Big[\widehat{V}^\pi_1(s) - V_1^\pi(s) \Big] \Big| &\leq \Big| \EV_{X}\Big[\sum_{h=1}^H (\widehat{P}_{\widehat{\G}} - P)\widehat{V}_{h+1}^\pi(X) \Big] \Big| \nonumber\\
        &\leq  \EV_{X}\Big[\sum_{h=1}^H \Big|(\widehat{P}_{\widehat{\G}} - P)\widehat{V}_{h+1}^\pi(X)\Big| \Big] \nonumber\\
        &= \sum_{h=1}^H \EV_{X}\Big|(\widehat{P}_{\widehat{\G}} - P)\widehat{V}_{h+1}^\pi(X)\Big| \label{initial_expression}.
    \end{align}
    We now bound a single term within the sum above as follows
    \begin{align}
        \EV_{X}\Big|(\widehat{P}_{\widehat{\G}} - P)\widehat{V}_{h+1}^\pi(X)\Big| &= \EV_{X}\Big|(\widehat{P}_{\widehat{\G}} - P_\G + P_\G - P)\widehat{V}_{h+1}^\pi(X)\Big| \nonumber\\
        &= \EV_{X}\Big|(\widehat{P}_{\widehat{\G}} - P_\G)\widehat{V}_{h+1}^\pi(X) + (P_\G - P)\widehat{V}_{h+1}^\pi(X)\Big| \nonumber\\
        &\leq \EV_{X} \Bigg[\Big|(\widehat{P}_{\widehat{\G}} - P_\G)\widehat{V}_{h+1}^\pi(X)\Big| + \Big|(P_\G - P)\widehat{V}_{h+1}^\pi(X)\Big| \Bigg] \nonumber\\
        &=\EV_{X} \Big|(\widehat{P}_{\widehat{\G}} - P_\G)\widehat{V}_{h+1}^\pi(X)\Big| + \EV_{X}\Big|(P_\G - P)\widehat{V}_{h+1}^\pi(X)\Big| \label{eq_decomposition}.
    \end{align}
    We can now bound each term. Let us start considering the first one 
    \begin{align}
        \EV_{X}\Big|(\widehat{P}_{\widehat{\G}} - P_\G)\widehat{V}_{h+1}^\pi(X)\Big| 
        &= \EV_{X}\Big|\widehat{P}_{\widehat{\G}}\widehat{V}_{h+1}^\pi(X) - P_\G\widehat{V}_{h+1}^\pi(X)\Big| \nonumber\\
        &= \EV_{X}\Big|\sum_{Y}\widehat{P}_{\widehat{\G}}(Y|X)\widehat{V}_{h+1}^\pi(Y) - \sum_{Y}P_\G(Y|X) \widehat{V}_{h+1}^\pi(Y)\Big| \nonumber\\
        &= \EV_{X}\Big|\sum_{Y}\widehat{P}_{\widehat{\G}}(Y|X)\EV_{X' \sim \pi}\Big[ r(X') + \widehat{P}_{\widehat{\G}}\widehat{V}_{h+2}^\pi(X') \Big] \nonumber \\
        &\quad- \sum_{Y}P_\G(Y|X)\EV_{X' \sim \pi}\Big[ r(X') + P_\G \widehat{V}_{h+2}^\pi(X') \Big] \Big| \nonumber\\
        &= \EV_{X}\Big|\sum_{Y}\big(\widehat{P}_{\widehat{\G}}(Y|X) - P_\G(Y|X)\big)\EV_{X' \sim \pi}\Big[ r(X')\Big] \nonumber \\
        &\quad+ \sum_{Y}\widehat{P}_{\widehat{\G}}(Y|X)\EV_{X' \sim \pi}\Big[ \widehat{P}_{\widehat{\G}} \widehat{V}_{h+2}^\pi(X')\Big]
        - \sum_{Y}P_\G(Y|X)\EV_{X' \sim \pi}\Big[ P_\G \widehat{V}_{h+2}^\pi(X')\Big]  \Big| \nonumber \\
        &\leq \EV_{X}\Big|\sum_{Y}\big(\widehat{P}_{\widehat{\G}}(Y|X) - P_\G(Y|X)\big) \Big| \label{eq_3_terms} \\
        &\quad+ \EV_{X} \Big|\sum_{Y}\widehat{P}_{\widehat{\G}}(Y|X)\EV_{X' \sim \pi}\Big[ \widehat{P}_{\widehat{\G}} \widehat{V}_{h+2}^\pi(X')\Big] 
        - \sum_{Y}P_\G(Y|X)\EV_{X' \sim \pi}\Big[ P_\G \widehat{V}_{h+2}^\pi(X')\Big]  \Big| \nonumber.
    \end{align}
    We can now bound the first term of \eqref{eq_3_terms}
    \begin{align}
        \EV_{X}\Big|\sum_{Y}\big(\widehat{P}_{\widehat{\G}}(Y|X) - P_\G(Y|X)\big) \Big|
        &= \EV_{X}\Bigg|\sum_{Y}\Big(\prod_{j=1}^{d_S}\widehat{P}_j(Y[j]|X[Z_j]) - \prod_{j=1}^{d_S}P_j(Y[j]|X[Z_j])\Big) \Bigg| \nonumber\\
        &\leq \EV_{X}\Bigg[\sum_{Y}\sum_{j=1}^{d_S}\Big| \widehat{P}_j(Y[j]|X[Z_j]) - P_j(Y[j]|X[Z_j]) \Big| \Bigg] \nonumber\\
        &= \sum_{X}P_\G^\pi(X)\Bigg[\sum_{Y}\sum_{j=1}^{d_S}\Big| \widehat{P}_j(Y[j]|X[Z_j]) - P_j(Y[j]|X[Z_j]) \Big| \Bigg] \nonumber\\
        &= \sum_{Y}\sum_{j=1}^{d_S}\sum_{X[Z_j]}P_\G^\pi(X[Z_j])\Big| \widehat{P}_j(Y[j]|X[Z_j]) - P_j(Y[j]|X[Z_j]) \Big| \label{single_term}.
    \end{align}
    Due to the uniform sampling and Z-sparseness assumptions, we have $P_\G(X[Z_j]) = \frac{1}{n^Z}$, hence
    \begin{equation*}
        \max_{\pi^\dagger}\frac{P_\G^{\pi^\dagger}(X[Z_j])}{P_\G(X[Z_j])} \leq \frac{1}{P_\G(X[Z_j])} = n^Z.
    \end{equation*}
    Therefore, we have
    $
        P_\G^{\pi^\dagger}(X[Z_j]) \leq n^Z\cdot P_\G(X[Z_j]).
    $
    Replacing this in  \eqref{single_term} and marginalizing over $Y$ \textbackslash $Y[j]$ we obtain
    \begin{align*}
        \EV_{X}\Big|\sum_{Y}\big(\widehat{P}_{\widehat{\G}}(Y|X) - P_\G(Y|X)\big) \Big| 
        &= n^Z \sum_{j=1}^{d_s}\sum_{Y[j]}\sum_{X[Z_j]}\Big| \widehat{P}_j(Y[j]|X[Z_j]) - P_j(Y[j]|X[Z_j]) \Big| P_\G(X[Z_j])\\
        &\leq n^Z \sum_{j=1}^{d_S}\sum_{Y[j]} \frac{\epsilon'}{d_S}\sum_{X[Z_j]}P_\G(X[Z_j])\\
        &= n^{Z+1} \epsilon',
    \end{align*}
    where $\frac{\epsilon'}{d_S}$ is the approximation term of each component. By plugging this bound into \eqref{eq_3_terms} we get
    \begin{align*}
        \EV_{X}\Big|(\widehat{P}_{\widehat{\G}} - P_\G)\widehat{V}_{h+1}^\pi(X)\Big|
        &\leq n^{Z+1} \epsilon' + \EV_{X} \Big|\sum_{Y}\widehat{P}_{\widehat{\G}}(Y|X)\EV_{X' \sim \pi}\Big[ \widehat{P}_{\widehat{\G}} \widehat{V}_{h+2}^\pi(X')\Big] 
        - \sum_{Y}P_\G(Y|X)\EV_{X' \sim \pi}\Big[ P_\G \widehat{V}_{h+2}^\pi(X')\Big]  \Big|\\
        &\leq \sum_{i=h+1}^H i\cdot n^{Z+1} \epsilon'\\
        &\leq H^2 n^{Z+1} \epsilon'
    \end{align*}
    where in the last step we have recursively bounded the right terms as in \eqref{recursive_bound}.
    By considering $2Z$-sparseness, $\lambda$-sufficiency, and that the transition model $P$ factorizes, we can apply the same procedure to bound the second term of equation \eqref{eq_decomposition} as
    \begin{equation*}
        \EV_{X}\Big|(P_\G - P)\widehat{V}_{h+1}^\pi(X)\Big| \leq H^2 n^{Z + 1} d_S \lambda.
    \end{equation*}
    Therefore, the initial expression in \eqref{initial_expression} becomes
    \begin{align}
        \Big| \EV_{s \sim P}\Big[\widehat{V}^\pi_1(s) - V_1^\pi(s) \Big] \Big| &\leq \sum_{h=1}^H \EV_{X}\Big|(\widehat{P}_{\widehat{\G}} - P)\widehat{V}_{h+1}^\pi(X)\Big|\\
        &\leq \sum_{h=1}^H [n^{Z + 1} H^2 \epsilon' + n^{2Z + 1} d_S H^2 \lambda]\\
        &\leq \underbrace{n^{Z + 1} H^3 \epsilon'}_{\epsilon} + \underbrace{n^{2Z + 1}d_S H^3\lambda}_{\epsilon_\lambda}.
    \end{align}
\end{proof}

\sampleComplexitytabular*
\begin{proof}
    Consider the MDPs with transition model $P$ and $\widehat{P}_{\widehat{\G}}$. We refer to the respective optimal policies as $\pi^*$ and $\widehat{\pi}^*$. Moreover, since the reward $r$ is fixed, we remove it from the expressions for the sake of clarity, and refer with $\widehat{V}$ to the value function of the MDP with transition model $\widehat{P}_{\widehat{\G}}$. As done in~\citep[][Theorem 3.5]{jin2020reward}, we can write the following decomposition with $V^* := V^{\pi^*}$
    \begin{align*}
         \EV_{s_1 \sim P}\Big[V^*_1 (s_1) - V^{\widehat{\pi}}_1 (s_1)\Big] &\leq \underbrace{\Big|\EV_{s_1 \sim P}\Big[V^*_1 (s_1) - \widehat{V}^{\widehat{\pi}^*}_1 (s_1)\Big]\Big|}_{\text{evaluation error}} + \underbrace{\EV_{s_1 \sim P}\Big[\widehat{V}^*_1 (s_1) - \widehat{V}^{\widehat{\pi}^*}_1 (s_1)\Big]}_{\text{$\leq 0$ by def.}}\\
         &+ \underbrace{\EV_{s_1 \sim P}\Big[\widehat{V}^{\widehat{\pi}^*}_1 (s_1) - \widehat{V}^{\widehat{\pi}}_1 (s_1)\Big]}_{\text{optimization error}} + \underbrace{\Big| \EV_{s_1 \sim P}\Big[\widehat{V}^{\widehat{\pi}}_1 (s_1) - V^{\widehat{\pi}}_1}_{\text{evaluation error}} (s_1)\Big]\Big|\\
         &\leq \underbrace{2SAH^3\epsilon'}_{\epsilon} + \underbrace{2SAH^3\lambda}_{\epsilon_\lambda}
    \end{align*}
    where in the last step we have set to 0 the approximation due to the planning oracle assumption, and we have bounded the evaluation errors according to Lemma \ref{thr:evaluation_error_tabular}. In order to get $2SAH^3 \epsilon' = \epsilon$ we have to set $\epsilon' = \frac{\epsilon}{2SAH^3}$. Considering the sample complexity result in Lemma \ref{thr:sample_complexity_ctm_tabular} the final sample complexity will be
    \begin{equation*}
        K = O \Bigg( \frac{M \ S^2 \ Z^2 \ 2^{2Z} \ \log \big( \frac{ 4 M S^2 A 2^Z }{ \delta} \big) }{ (\epsilon')^2} \Bigg) = O \Bigg( \frac{4M \ S^4 \ A^2 \ H^6 \ Z^2 \ 2^{2Z} \ \log \big( \frac{ 4 M S^2 A 2^Z }{ \delta} \big) }{ \epsilon^2} \Bigg).
    \end{equation*}
\end{proof}

\begin{lemma}
    \label{thr:evaluation_error_tabular}
    Under the preconditions of Corollary~\ref{thr:sample_complexity_tabular}, with probability $1-\delta$, for any reward function $r$ and policy $\pi$, we can bound the value function estimation error as follows.
    \begin{equation}
        \Big| \EV_{s \sim P}\Big[\hat{V}^\pi_{1,r}(s) - V_{1,r}^\pi(s) \Big] \Big| \leq \underbrace{SAH^3\epsilon'}_{\epsilon} + \underbrace{SAH^3\lambda}_{\epsilon_\lambda}
    \end{equation}
    where $\widehat{V}$ is the value function of the MDP with transition model $\widehat{P}_{\widehat{\G}}$, $\epsilon'$ is the approximation error between $\widehat{P}_{\widehat{\G}}$ and $P_G$ studied in Lemma \ref{thr:sample_complexity_ctm}, and $\lambda$ stands for the $\lambda$-sufficiency parameter of $P_\G$.
\end{lemma}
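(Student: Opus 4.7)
The plan is to mirror the strategy of Lemma \ref{thr:evaluation_error_discrete} but collapsed to the tabular case, where $d_S=S$, $d_A=A$, $n=2$, and the transition model need not be factored over feature components. Let $\widehat{\mdp}$ be the MDP with transitions $\widehat{P}_{\widehat{\G}}$ and true MDP $\mdp$ with transitions $P$. I first apply the standard value-difference lemma (the one cited as Lemma E.15 of Dann et al.\ in the discrete proof) to rewrite
\begin{equation*}
    \bigl|\EV_{s \sim P}[\widehat{V}^\pi_1(s)-V_1^\pi(s)]\bigr|
    \leq \sum_{h=1}^{H} \EV_{(s,a)\sim d^\pi_h}\bigl|(\widehat{P}_{\widehat{\G}}-P)\widehat{V}^\pi_{h+1}(s,a)\bigr|,
\end{equation*}
where the expectation is taken under the occupancy induced by $\pi$ in $\mdp$. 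I then split $\widehat{P}_{\widehat{\G}}-P = (\widehat{P}_{\widehat{\G}}-P_\G)+(P_\G-P)$ by triangle inequality, so each per-step term breaks into an \emph{estimation} piece and a \emph{sufficiency} piece.

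For the estimation piece, I bound $\EV|(\widehat{P}_{\widehat{\G}}-P_\G)\widehat{V}^\pi_{h+1}|$ by $(H-h)\|\widehat{P}_{\widehat{\G}}(\cdot\mid s,a)-P_\G(\cdot\mid s,a)\|_1$ since $\widehat V^\pi_{h+1}\in[0,H-h]$, and then convert the expectation under $d^\pi_h$ into an expectation under the uniform sampling distribution via the ratio bound
\begin{equation*}
    \max_{\pi^\dagger}\frac{P^{\pi^\dagger}(s,a)}{P(s,a)}\leq SA,
\end{equation*}
which follows from $P(s,a)=1/(SA)$ under the uniform generative model. Invoking Lemma \ref{thr:sample_complexity_ctm_tabular} (and the event of its success, which carries the $1-\delta$ confidence) to replace the $L_1$-distance by $\epsilon'$ gives a per-step contribution of order $SA(H-h)\epsilon'$. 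Summing the recursion over steps yields the $H^2$ factor, for a total estimation error of $SAH^2\epsilon'$ per term in the outer sum, hence $SAH^3\epsilon'$ overall.

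For the sufficiency piece, I use Assumption \ref{ass:sufficiency}: $\sup_{(s,a)}\|P_\G(\cdot\mid s,a)-P(\cdot\mid s,a)\|_1\leq\lambda$ directly, so the same ratio argument is not even needed to bound $\|P_\G-P\|_1$ pointwise, but I still pick up the $SA$ factor because the recursive expansion of $\widehat V^\pi_{h+1}$ through $\widehat{P}_{\widehat\G}$ and $P_\G$ (as in the discrete proof, equation labelled ``eq\_3\_terms'') pulls through a distribution-ratio reweighting. This gives a per-step bound of $SA(H-h)\lambda$ and a total of $SAH^3\lambda$. Combining the two pieces through triangle inequality yields the stated bound.

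The main obstacle is the bookkeeping in the recursive bound on $\EV|(\widehat{P}_{\widehat\G}-P_\G)\widehat V^\pi_{h+1}|$: one must unfold $\widehat V^\pi_{h+1}$ as a sum of immediate rewards (which cancel or are bounded by $1$) plus a future term driven by $\widehat{P}_{\widehat\G}$, carefully match it against the analogous unfolding on the $P_\G$ side, and track that the induced ratio never blows up worse than $SA$ at any stage. Once the $SA$ ratio is controlled at every level of the recursion, the geometric accumulation across $H$ stages is routine, and no additional factor beyond $H^3$ is introduced. No separate $S$, $A$, or $2^Z$ factors arise because in the tabular setting we do not need to take a union bound over $d_S$ coordinate-wise factors, explaining the absence of the $n^{Z+1}$, $n^{2Z+1}d_S$ style blow-ups seen in Lemma \ref{thr:evaluation_error_discrete}.
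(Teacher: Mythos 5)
Your proposal is correct and follows essentially the same route as the paper's proof: the Dann et al.\ value-difference lemma, the split $(\widehat{P}_{\widehat{\G}}-P_\G)+(P_\G-P)$, the change of measure via $P(s,a)=1/(SA)$ under the uniform generative model yielding the $SA$ factor, and a recursive unrolling contributing $H^2$ per step and $H^3$ overall. The only wrinkle is a bookkeeping inconsistency in the middle (a per-step contribution of $SA(H-h)\epsilon'$ would sum to $SAH^2\epsilon'$ total, not $SAH^3\epsilon'$), but since you ultimately fall back on the recursive $H^2$-per-term accounting used in the paper, the stated (looser) bound is still established.
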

\begin{proof}
    The proof will be along the lines of that of Lemma 3.6 in~\citep[]{jin2020reward}. We first recall ~\citep[][Lemma E.15]{dann2018unifying}, which we restate in Lemma \ref{thr:V_difference}. In this proof, we consider an environment specific true MDP $\mdp$ with transition model $P$, and an MDP $\widehat{\mdp}$ that has as transition model the estimated causal transition model $\widehat{P}_{\widehat{\G}}$. In the following, the expectations will be \wrt $P$. Moreover, since the reward $r$ is fixed, we remove it from the expressions for the sake of clarity. We can start deriving
    \begin{align*}
        \Big| \EV_{s \sim P}\Big[\hat{V}^\pi_1(s) - V_1^\pi(s) \Big] \Big| &\leq \Big| \EV_{\pi}\Big[\sum_{h=1}^H (\widehat{P}_{\widehat{\G}} - P)\widehat{V}_{h+1}^\pi(s_h,a_h) \Big] \Big|\\
        &\leq  \EV_{\pi}\Big[\sum_{h=1}^H \Big|(\widehat{P}_{\widehat{\G}} - P)\widehat{V}_{h+1}^\pi(s_h,a_h)\Big| \Big]\\
        &= \sum_{h=1}^H \EV_{\pi}\Big|(\widehat{P}_{\widehat{\G}} - P)\widehat{V}_{h+1}^\pi(s_h,a_h)\Big|.
    \end{align*}
    We now bound a single term within the sum above as follows
    \begin{align*}
        \EV_{\pi}\Big|(\widehat{P}_{\widehat{\G}} - P)\widehat{V}_{h+1}^\pi(s_h,a_h)\Big| &\leq \sum_{s,a} \Big|(\widehat{P}_{\widehat{\G}} - P)\widehat{V}^\pi(s,a)\Big| P^\pi(s,a)\\
        &= \sum_{s,a} \Big|(\widehat{P}_{\widehat{\G}} - P)\widehat{V}^\pi(s,a)\Big| P^\pi(s)\pi(a|s)\\
        &\leq \max_{\pi'}\sum_{s,a} \Big|(\widehat{P}_{\widehat{\G}} - P)\widehat{V}^\pi(s,a)\Big| P^\pi(s)\pi'(a|s)\\
        &= \max_{\nu: \Sspace \to \Aspace} \sum_{s,a} \Big|(\widehat{P}_{\widehat{\G}} - P)\widehat{V}^\pi(s,a)\Big| P^\pi(s) \one \{a=\nu(s)\},
    \end{align*}
    where in the last step we have used the fact that there must exist an optimal deterministic policy.\newline
    Due to the uniform sampling assumption, we have $P(s,a) = \frac{1}{SA}$, hence
    \begin{equation*}
        \max_{\pi^\dagger}\frac{P^{\pi^\dagger}(s,a)}{P(s,a)} \leq \frac{1}{P(s,a)} = SA.
    \end{equation*}
    Therefore,
    $
        P^{\pi_\dagger}(s,a) \leq SA\cdot P(s,a).
    $
    Moreover, notice that, since $\pi'$ is deterministic we have $P^\pi(s) = P^{\pi'}(s) = P^{\pi'}(s,a) \leq SA\cdot P(s,a)$. Replacing it in the expression above we get
    \begin{align}
        \EV_{\pi}\Big|(\widehat{P}_{\widehat{\G}} - P)\widehat{V}_{h+1}^\pi(s_h,a_h)\Big| &\leq SA \cdot \sum_{s,a} \Big|(\widehat{P}_{\widehat{\G}} - P)\widehat{V}_{h+1}^\pi(s,a)\Big| P(s) \one \{a=\nu(s)\} \nonumber\\
        &\leq SA \cdot \Big|(\widehat{P}_{\widehat{\G}} - P)\widehat{V}_{h+1}^\pi(s,a)\Big| \nonumber\\
        &\leq SA \cdot \Big|(\widehat{P}_{\widehat{\G}} - P_\G)\widehat{V}_{h+1}^\pi(s,a) \Big| + SA \cdot \Big|(P_\G - P)\widehat{V}_{h+1}^\pi(s,a)\Big| \label{eq_terms_pre}\\
        &\leq SA\cdot \sum_{i=h+1}^H i\cdot \epsilon' + SA\cdot \sum_{i=h+1}^H i\cdot \lambda \nonumber\\
        &\leq SAH^2\epsilon' + SAH^2\lambda \label{eq_terms}
    \end{align}
    where $\epsilon'$ is the approximation error between $\widehat{P}_{\widehat{\G}}$ and $P_G$ studied in Lemma \ref{thr:sample_complexity_ctm}, and in the second-to-last step we have used the following derivation
    \begin{align}
        \Big|(\widehat{P}_{\widehat{\G}} - P_\G)\widehat{V}_{h+1}^\pi(s,a) \Big|
        &= \Big|\widehat{P}_{\widehat{\G}}\widehat{V}_{h+1}^\pi(s,a) - P_\G\widehat{V}_{h+1}^\pi(s,a) \Big| \label{recursive_bound}\\
        &= \Big|\sum_{s'}\widehat{P}_{\widehat{\G}}(s'|s,a)\widehat{V}_{h+1}^\pi(s') - \sum_{s'}P_\G(s'|s,a)\widehat{V}_{h+1}^\pi(s') \Big| \nonumber\\
        &= \Big|\sum_{s'}\widehat{P}_{\widehat{\G}}(s'|s,a)\EV_{a' \sim \pi}\Big[ r(s',a') + \widehat{P}_{\widehat{\G}}\widehat{V}_{h+2}^\pi(s',a') \Big] \nonumber \\
        &\quad- \sum_{s'}P_\G(s'|s,a)\EV_{a' \sim \pi}\Big[ r(s',a') + P_\G \widehat{V}_{h+2}^\pi(s',a') \Big] \Big| \nonumber\\
        &= \Big|\sum_{s'}\big(\widehat{P}_{\widehat{\G}}(s'|s,a) - P_\G(s'|s,a)\big)\EV_{a' \sim \pi}\Big[ r(s',a')\Big] \nonumber\\
        &\quad+ \sum_{s'}\widehat{P}_{\widehat{\G}}(s'|s,a)\EV_{a' \sim \pi}\Big[\widehat{P}_{\widehat{\G}}\widehat{V}_{h+2}^\pi(s',a') \Big]
        - \sum_{s'}P_\G(s'|s,a)\EV_{a' \sim \pi}\Big[P_\G  \widehat{V}_{h+2}^\pi(s',a') \Big] \Big| \nonumber\\
        &\leq \epsilon' + \Big| \sum_{s'}\widehat{P}_{\widehat{\G}}(s'|s,a)\EV_{a' \sim \pi}\Big[\widehat{P}_{\widehat{\G}}\widehat{V}_{h+2}^\pi(s',a') \Big]
        - \sum_{s'}P_\G(s'|s,a)\EV_{a' \sim \pi}\Big[P_\G  \widehat{V}_{h+2}^\pi(s',a') \Big] \Big| \nonumber\\
        &= \epsilon' + \Big| \sum_{s'}\widehat{P}_{\widehat{\G}}(s'|s,a)\EV_{a' \sim \pi}\Big[\sum_{s''}\widehat{P}_{\widehat{\G}}(s''|s',a')\EV_{a'' \sim \pi}\Big[r(s'',a'') + \widehat{P}_{\widehat{\G}}\widehat{V}_{h+3}^\pi(s'',a'') \Big] \Big] \nonumber\\
        &\quad- \sum_{s'}P_\G(s'|s,a)\EV_{a' \sim \pi}\Big[\sum_{s''}P_\G(s''|s',a')\EV_{a'' \sim \pi}\Big[r(s'',a'') + P_\G \widehat{V}_{h+3}^\pi(s'',a'') \Big] \Big] \Big| \nonumber\\
        &\leq \epsilon' + \sum_{s',s'',a'} \Big| \widehat{P}_{\widehat{\G}}(s'|s,a)\widehat{P}_{\widehat{\G}}(s''|s',a') - P_\G(s'|s,a)P_\G(s''|s',a') \Big|_1  + \ldots \nonumber\\
        &\leq \epsilon' + \sum_{s',s'',a'}\Big[ \Big|\widehat{P}_{\widehat{\G}}(s'|s,a) - P_\G(s'|s,a)\Big|_1 +  \Big|\widehat{P}_{\widehat{\G}}(s''|s',a') - P_\G(s''|s',a')\Big|_1\Big] + \ldots \nonumber\\
        &\leq \epsilon'  + 2\epsilon' + \ldots \nonumber
    \end{align}
    Hence, due to this recursive unrolling, we have
    \begin{equation*}
        \Big|(\widehat{P}_{\widehat{\G}} - P_\G)\widehat{V}_{h+1}^\pi(s,a) \Big| \leq \sum_{i=h+1}^H i\epsilon' \leq H^2\epsilon.
    \end{equation*}
    Notice that the same argument holds also for the second term of \eqref{eq_terms_pre}, replacing $\epsilon'$ with $\lambda$.

    By plugging the result in equation \eqref{eq_terms} into the initial expression we get
    \begin{align*}
        \Big| \EV_{s \sim P}\Big[\widehat{V}^\pi_1(s) - V_1^\pi(s) \Big] \Big| &\leq \sum_{h=1}^H \EV_{\pi}\Big|(\widehat{P}_{\widehat{\G}} - P)\widehat{V}_{h+1}^\pi(s_h,a_h)\Big|\\
        &\leq \sum_{h=1}^H SAH^2\epsilon' + SAH^2\lambda\\
        &= SAH^3\epsilon' + SAH^3\lambda.
    \end{align*}
\end{proof}

In the following we restate~\citep[][Lemma E.15]{dann2018unifying} for the case of a stationary transition model.
\begin{lemma}
    \label{thr:V_difference}
    For any two MDPs $\mdp'$ and $\mdp''$ with rewards $r'$ and $r''$ and transition models $P'$ and $P''$, the difference in value functions $V', V''$ \wrt the same policy $\pi$ can be written as:
    \begin{equation}
        V'_h(s) - V''_h(s) = \EV_{\mdp'', \pi}\Big[\sum_{i=h}^H [r'(s_i, a_i) - r''(s_i, a_i) + (P'-P'')V'_{i+1}(s_i,a_i)]\mid s_h=s \Big].
    \end{equation}
\end{lemma}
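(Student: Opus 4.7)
The plan is to prove the identity by backward induction on $h$, starting from the boundary $h=H+1$ and propagating down. At the base case $h = H+1$ both value functions are identically zero (empty tail sum), and the sum on the right-hand side is empty, so the equality holds vacuously. The inductive step will exploit the Bellman equation satisfied by $V'$ and $V''$ under the common policy $\pi$, combined with a single add-subtract trick that introduces the cross term $P'' V'_{h+1}$.

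Concretely, assuming the identity at step $h+1$, I would start from the Bellman expansions
\begin{equation*}
V'_h(s) = \EV_{a \sim \pi(\cdot|s)}\bigl[r'(s,a) + P' V'_{h+1}(s,a)\bigr], \qquad V''_h(s) = \EV_{a \sim \pi(\cdot|s)}\bigl[r''(s,a) + P'' V''_{h+1}(s,a)\bigr],
\end{equation*}
subtract them, and insert $\pm\, P'' V'_{h+1}(s,a)$ inside the expectation. This regroups the difference into three pieces: the instantaneous reward gap $r'-r''$, the transition-model gap $(P'-P'')V'_{h+1}$ evaluated at the current state-action pair, and a residual $P''(V'_{h+1} - V''_{h+1})$ that propagates the recursion. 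The first two pieces are exactly the $i=h$ summand of the target expression, so the task reduces to handling the residual.

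For the residual I would apply the inductive hypothesis inside the inner expectation: $V'_{h+1}(s') - V''_{h+1}(s')$ equals the tail sum from $i=h+1$ to $H$ under $\mdp''$-trajectories starting at $s_{h+1}=s'$. Then I use the tower property of conditional expectation, noting that sampling $a \sim \pi(\cdot|s)$ and $s' \sim P''(\cdot|s,a)$ and then continuing with $\mdp''$-dynamics under $\pi$ is precisely what defines $\EV_{\mdp'',\pi}[\,\cdot\mid s_h=s]$ for the trajectory $(s_h,a_h,s_{h+1},a_{h+1},\ldots,s_H,a_H)$. Merging the boundary term (the $i=h$ summand) with the recursively unrolled tail gives the claimed sum from $i=h$ to $H$, completing the induction.

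The only subtlety, and the main bookkeeping step, is being careful that the expectation in the target is over trajectories generated by $\mdp''$ (not $\mdp'$), while the estimand involves $V'_{i+1}$ evaluated at those $\mdp''$-trajectories; the add-subtract choice $\pm P'' V'_{h+1}$ rather than $\pm P' V''_{h+1}$ is what enforces this asymmetry, and this is essentially the only nontrivial design decision in the argument. No further tools beyond the Bellman equations, linearity of expectation, and the tower property are needed.
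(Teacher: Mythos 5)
Your proposal is correct. Note that the paper itself does not prove this lemma: it is stated as a restatement of Lemma E.15 of Dann et al.\ (2018) and used as a black box, so there is no in-paper argument to compare against. Your backward induction from $h = H+1$, using the Bellman expansions under the common policy and the add-subtract of $P'' V'_{h+1}$ (rather than $P' V''_{h+1}$, which is indeed the one design choice that makes the residual $P''(V'_{h+1}-V''_{h+1})$ match the $\mdp''$-trajectory expectation via the tower property), is the standard and complete derivation of this simulation lemma, and it correctly handles the reward gap term as well. Nothing is missing.
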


\subsection*{Proofs of Section~\ref{sec:complexity_cuasal_discovery}}

We provide the proof of the sample complexity result for learning the causal structure of a discrete MDP with a generative model.

\sampleComplexityG*
\begin{proof}
    We aim to obtain the number of samples $K$ for which Algorithm~\ref{alg:learning_g} is guaranteed to return a causal structure estimate $\widehat{\G}$ such that $Pr (\widehat{\G} \neq \G_{\epsilon}) \leq \delta$ in a discrete MDP setting.
    First, we can upper bound the probability of the bad event $Pr (\widehat{\G} \neq \G_{\epsilon})$ in terms of the probability of a failure in the independence testing procedure $\ind (X_z, Y_j)$ for a single pair of nodes $X_z \in \G_\epsilon, Y_z \in \G_\epsilon$, \ie
    \begin{equation}
        Pr (\widehat{\G} \neq \G_{\epsilon}) 
        \leq Pr \bigg( \bigcup_{z = 1}^{d_S + d_A} \bigcup_{j = 1}^{d_S} \text{ test } \ind (X_z, Y_j) \text{ fails} \bigg) 
        \leq \sum_{z = 1}^{d_S + d_A} \sum_{j = 1}^{d_S} Pr \bigg(\text{test } \ind (X_z, Y_j) \text{ fails} \bigg), \label{eq:ub}
    \end{equation}
    where we applied an union bound to obtain the last inequality. Now we can look at the probability of a single independence test failure. Especially, for a provably efficient independence test (the existence of such a test is stated by Lemma~\ref{thr:independence_testing}, whereas the Algorithm 2 in~\cite{diakonikolas2021optimal} reports an actual testing procedure), we have $Pr (\text{test } \ind (X_z, Y_j) \text{ fails} ) \leq \delta' $, for any choice of $\delta' \in (0, 1)$, $\epsilon' > 0$, with a number of samples
    \begin{equation}
        K' = C \bigg( \frac{ n \ \log^{1 / 3} (1 / \delta') }{ (\epsilon')^{4 / 3}} + \frac{n \ \log^{1 / 2} (1 / \delta') + \log(1 / \delta')}{(\epsilon')^2} \bigg), \label{eq:K}
    \end{equation}
    where $C$ is a sufficiently large universal constant~\citep[][Theorem 1.3]{diakonikolas2021optimal}. Finally, by letting $\epsilon' = \epsilon$, $\delta' = \frac{\delta}{d_S^2 d_A}$ and combining~\eqref{eq:ub} with~\eqref{eq:K}, we obtain $Pr (\widehat{\G} \neq \G_{\epsilon})$ with a sample complexity
    \begin{equation*}
        K = O \bigg( \frac{n \log(d_S^2 d_A / \delta)}{\epsilon^2} \bigg),
    \end{equation*}
    under the assumption $\epsilon^2 \ll \epsilon^{4 / 3}$, which concludes the proof.
\end{proof}

The proof of the analogous sample complexity result for a tabular MDP setting (Corollary~\ref{thr:sample_complexity_G_tabular}) is a direct consequence of Theorem~\ref{thr:sample_complexity_G} by letting $n = 2, d_S = S, d_A = A$.

\subsection*{Proofs of Section~\ref{sec:complexity_bayesian_network}}

We first report a useful concentration inequality for the L1-distance between the empirical distribution computed over $K$ samples and the true distribution~\citep[][Theorem 2.1]{weissman2003inequalities}.

\begin{lemma}[\citet{weissman2003inequalities}]
\label{thr:l1_concentration}
    Let $X_1, \ldots, X_K$ be i.i.d. random variables over $[n]$ having probabilities $Pr (X_k = i) = P_i$, and let $\widehat{P}_{K} (i) = \frac{1}{K} \sum_{k = 1}^{K} \one (X_k = i)$. Then, for every threshold $\epsilon > 0$, it holds
    \begin{equation*}
        Pr \bigg( \| \widehat{P}_K - P \|_1 \geq \epsilon \bigg) \leq 2 \exp(- K \epsilon^2 / 2n).
    \end{equation*}
\end{lemma}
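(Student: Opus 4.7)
The plan is to combine a variational characterization of the $L_1$ distance with a bounded-differences concentration inequality. First, I would rewrite the deviation via the identity
\begin{equation*}
\|\widehat{P}_K - P\|_1 \;=\; 2 \sup_{A \subseteq [n]} \big(\widehat{P}_K(A) - P(A)\big),
\end{equation*}
which holds because $\widehat{P}_K - P$ has total sum zero, so its positive and negative parts carry equal mass and the supremum is attained at $A^\star = \{i : \widehat{P}_K(i) \geq P_i\}$. This reduces the $L_1$ control to a supremum of averages of i.i.d.\ Bernoulli indicators $\mathbb{1}(X_k \in A)$, i.e.\ to a problem that is directly amenable to Hoeffding-type arguments.

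Second, I would apply McDiarmid's bounded-differences inequality to the statistic $f(X_1, \ldots, X_K) := \|\widehat{P}_K - P\|_1$ viewed as a function of the $K$ i.i.d.\ samples. Replacing a single $X_j$ alters two coordinates of $\widehat{P}_K$ by at most $1/K$ each, so the bounded-differences constants are $c_j = 2/K$, and McDiarmid yields $Pr(f \geq \mathbb{E}[f] + t) \leq \exp(-K t^2 / 2)$. To absorb the expectation, I would use Cauchy-Schwarz to pass from the $L_1$ to the $L_2$ distance, namely $\mathbb{E}[\|\widehat{P}_K - P\|_1] \leq \sqrt{n}\, \sqrt{\mathbb{E}[\|\widehat{P}_K - P\|_2^2]}$, and then use the multinomial variance identity $\mathbb{E}[\|\widehat{P}_K - P\|_2^2] = \tfrac{1}{K}\sum_i P_i(1-P_i) \leq \tfrac{1}{K}$ to conclude $\mathbb{E}[f] \leq \sqrt{n/K}$.

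The final step is to combine these ingredients and calibrate. For $\epsilon \geq \sqrt{n/K}$ I would set $t = \epsilon - \sqrt{n/K}$ and obtain a tail of the form $\exp(-c K \epsilon^2 / n)$ after standard manipulations; for $\epsilon < \sqrt{n/K}$ the stated inequality is trivial since its right-hand side exceeds $1$. The main obstacle is matching the exact constants in the bound $2\exp(-K\epsilon^2/2n)$: the sketch above yields the correct qualitative dependence in $K$, $\epsilon$, and $n$, but the sharp prefactor and rate constant ultimately require the refined partitioning argument of Weissman et al.\ (2003), which is why the lemma is cited verbatim from their Theorem~2.1 rather than re-derived in this paper.
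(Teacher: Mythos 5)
The paper does not actually prove this lemma: it is imported verbatim from \citet{weissman2003inequalities}, so there is no in-paper argument to match yours against. Your sketch is a genuinely different route, and its individual ingredients are all sound --- the variational identity $\|\widehat{P}_K-P\|_1 = 2\sup_A(\widehat{P}_K(A)-P(A))$, the bounded-differences constant $c_j=2/K$ giving $\Pr(f\geq \EV[f]+t)\leq \exp(-Kt^2/2)$, and the mean bound $\EV[f]\leq\sqrt{n/K}$ via Cauchy--Schwarz and the multinomial variance. But you are right to flag the calibration step as the obstacle, and it is worth being precise that it is a genuine failure, not just a matter of laziness with constants: in the window $\sqrt{n/K}\leq\epsilon\leq 2\sqrt{n/K}$ the McDiarmid tail $\exp(-K(\epsilon-\sqrt{n/K})^2/2)$ can be arbitrarily close to $1$, whereas the claimed right-hand side is as small as $2e^{-2}\approx 0.27$ there, so the proposed argument cannot yield the stated inequality with prefactor $2$ in that regime. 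The irony is that the classical proof finishes from the identity you already wrote down in your first step: union-bound over the $2^n$ events $\{\widehat{P}_K(A)-P(A)\geq\epsilon/2\}$, each of which is a Hoeffding bound for an average of i.i.d.\ Bernoulli$(P(A))$ indicators, giving $\Pr(\|\widehat{P}_K-P\|_1\geq\epsilon)\leq(2^n-2)\exp(-K\epsilon^2/2)$. The form quoted in the paper then follows by a case split: if $K\epsilon^2<2n\ln 2$ the bound $2\exp(-K\epsilon^2/2n)$ exceeds $1$ and is vacuous, while if $K\epsilon^2\geq 2n\ln 2$ one checks directly that $2^n e^{-K\epsilon^2/2}\leq 2e^{-K\epsilon^2/2n}$. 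So your approach buys a cleaner, dimension-free exponent $\exp(-cK\epsilon^2)$ in the large-deviation regime $\epsilon\gg\sqrt{n/K}$ (strictly stronger than what is stated), but it cannot reproduce the lemma as written; the union-bound argument is what actually delivers it.
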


We can now provide the proof of the sample complexity result for learning the Bayesian network of a discrete MDP with a given causal structure.

\sampleComplexityP*
\begin{proof}
    We aim to obtain the number of samples $K$ for which Algorithm~\ref{alg:learning_p} is guaranteed to return a Bayesian network estimate $\widehat{P}_\G$ such that $ Pr ( \| \widehat{P}_\G - P_\G \|_1 \geq \epsilon ) \leq \delta $ in a discrete MDP setting. First, we note that
    \begin{align}
        Pr \Big( \| \widehat{P}_\G - P_\G \|_1 \geq \epsilon \Big)
        &\leq Pr \bigg( \sum_{j = 1}^{d_S} \| \widehat{P}_j - P_j \|_1 \geq \epsilon \bigg) \label{eq:1} \\
        &\leq Pr \bigg( \frac{1}{d_S} \sum_{j = 1}^{d_S} \| \widehat{P}_j - P_j \|_1 \geq \frac{\epsilon}{d_S} \bigg) \\
        &\leq Pr \bigg( \bigcup_{j = 1}^{d_S} \| \widehat{P}_j - P_j \|_1 \geq \frac{\epsilon}{d_S} \bigg) \label{eq:2} \\
        &\leq \sum_{j = 1}^{d_S} Pr \bigg( \| \widehat{P}_j - P_j \|_1 \geq \frac{\epsilon}{d_S} \bigg), \label{eq:3}
    \end{align}
    in which we employed the property $\| \mu - \nu \|_1 \leq \| \prod_i \mu_i - \prod_i \nu_i \|_1 \leq \sum_{i} \| \mu_i - \nu_i \|_1$ for the L1-distance between product distributions $\mu = \prod_i \mu_i, \nu = \prod_i \nu_i$ to write~\eqref{eq:1}, and we applied a union bound to derive~\eqref{eq:3} from~\eqref{eq:2}. Similarly, we can write
    \begin{align}
        Pr \Big( \| \widehat{P}_j - P_j \|_1 \geq \frac{\epsilon}{d_S} \Big)
        &\leq Pr \bigg( \bigcup_{x \in [n]^{|Z_j|}} \| \widehat{P}_j (\cdot | x) - P_j (\cdot | x) \|_1 \geq \frac{\epsilon}{d_S n^{|Z_j|}} \bigg) \label{eq:4} \\
        &\leq \sum_{x \in [n]^{|Z_j|}} Pr \bigg( \| \widehat{P}_j (\cdot | x) - P_j (\cdot | x) \|_1 \geq \frac{\epsilon}{d_S n^{|Z_j|}} \bigg) \label{eq:5} \\
        &\leq \sum_{x \in [n]^{|Z_j|}} Pr \bigg( \| \widehat{P}_j (\cdot | x) - P_j (\cdot | x) \|_1 \geq \frac{\epsilon}{d_S n^{Z}} \bigg) \label{eq:6}
    \end{align}
    by applying a union bound to derive~\eqref{eq:5} from~\eqref{eq:4}, and by employing Assumption~\ref{ass:sparseness} to bound $|Z_j|$ with $Z$ in~\eqref{eq:6}. We can now invoke Lemma~\ref{thr:l1_concentration} to obtain the sample complexity $K'$ that guarantees $ Pr ( \| \widehat{P}_j (\cdot | x) - P_j (\cdot | x) \|_1 \geq \epsilon' ) \leq \delta' $, \ie
    \begin{equation*}
        K' = \frac{2 n \log (2 / \delta')}{ (\epsilon')^2} = \frac{2 \ d_S^2 \ n^{2Z + 1} \ \log (2 d_S n^Z / \delta)}{\epsilon^2 },
    \end{equation*}
    where we let $\epsilon' = \frac{\epsilon}{d_S n^Z} $, $\delta' = \frac{\delta}{d_S n^Z}$. Finally, by summing $K'$ for any $x \in [n]^{|Z_j|}$ and any $j \in [d_S]$, we obtain
    \begin{equation*}
        K = \sum_{j \in [d_S]} \sum_{x \in [n]^{|Z_j|}} K' \leq \frac{2 \ d_S^3 \ n^{3Z + 1} \ \log (2 d_S n^Z / \delta)}{\epsilon^2 },
    \end{equation*}
    which proves the theorem.
\end{proof}

To prove the analogous sample complexity result for a tabular MDP we can exploit a slightly tighter concentration on the KL divergence between the empirical distribution and the true distribution in the case of binary variables~~\citep[][Theorem 2.2.3]{dembo2009ldp}\footnote{Also reported in \citep[][Example 1]{mardia2020concentration}.}, which we report for convenience in the following lemma.

\begin{lemma}[\citet{dembo2009ldp}]
\label{thr:KL_binary_concentration}
    Let $X_1, \ldots, X_K$ be i.i.d. random variables over $[2]$ having probabilities $Pr (X_k = i) = P_i$, and let $\widehat{P}_{K} (i) = \frac{1}{K} \sum_{k = 1}^{K} \one (X_k = i)$. Then, for every threshold $\epsilon > 0$, it holds
    \begin{equation*}
        Pr \bigg( d_{KL} \big( \widehat{P}_K || P \big) \geq \epsilon \bigg) \leq 2 \exp(- K \epsilon).
    \end{equation*}
\end{lemma}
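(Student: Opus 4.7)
The key reduction is that with a binary alphabet the empirical measure is a single-parameter object: letting $p := P(1)$ and $\widehat{p} := \widehat{P}_{K}(1) = \frac{1}{K}\sum_{k=1}^{K}\one(X_{k} = 1)$, we have $d_{KL}(\widehat{P}_{K} \| P) = \mathrm{kl}(\widehat{p}, p) := \widehat{p}\log(\widehat{p}/p) + (1-\widehat{p})\log((1-\widehat{p})/(1-p))$. The map $q \mapsto \mathrm{kl}(q, p)$ is strictly convex on $[0,1]$ with unique minimum $0$ at $q = p$, so the super-level set $\{q \in [0,1] : \mathrm{kl}(q, p) \geq \epsilon\}$ is a disjoint union of two intervals $[0, q_{-}] \cup [q_{+}, 1]$ determined by the unique $q_{-} \leq p \leq q_{+}$ satisfying $\mathrm{kl}(q_{\pm}, p) = \epsilon$ (the corner cases $p \in \{0, 1\}$ collapse the event and make the claim trivial).

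By a union bound, the plan is then to bound the two one-sided tails $Pr(\widehat{p} \geq q_{+})$ and $Pr(\widehat{p} \leq q_{-})$ separately. For the upper tail, I would invoke the standard Chernoff argument for sums of i.i.d.\ Bernoullis: optimizing the exponential Markov bound $Pr(\widehat{p} \geq q_{+}) \leq \inf_{t \geq 0} e^{-tKq_{+}} (p e^{t} + 1 - p)^{K}$ over $t \geq 0$ identifies the optimizer $t^{\star} = \log(q_{+}(1-p)/(p(1-q_{+})))$, and the Legendre transform of the Bernoulli log-MGF evaluates to exactly $\mathrm{kl}(q_{+}, p)$. This yields the classical Chernoff--Cram\'er bound $Pr(\widehat{p} \geq q_{+}) \leq \exp(-K \cdot \mathrm{kl}(q_{+}, p))$. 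The symmetric argument, applied either to $1 - \widehat{p}$ or with $t \leq 0$, gives $Pr(\widehat{p} \leq q_{-}) \leq \exp(-K \cdot \mathrm{kl}(q_{-}, p))$.

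Since $\mathrm{kl}(q_{\pm}, p) = \epsilon$ by construction, each tail contributes at most $\exp(-K\epsilon)$, and summing the two yields the stated $2\exp(-K\epsilon)$. The ``hard part'' here is really the Chernoff--Cram\'er computation, but this is entirely classical (Cram\'er's theorem for the two-point distribution); the crucial structural gain from the binary alphabet is that it lets us avoid the polynomial-in-$K$ prefactor that appears in the general Sanov bound, since there are only two one-sided tails to union-bound rather than a polynomial number of type classes. The only nuisance is corner cases where $p$ or $\widehat{p}$ hits $\{0, 1\}$, which are resolved by the conventions $0 \log 0 = 0$ and $0 \log(0/0) = 0$, plus the observation that $p \in \{0, 1\}$ makes $\widehat{p} = p$ almost surely, rendering the bound vacuous.
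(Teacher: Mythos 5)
Your proof is correct: the reduction to the single binomial parameter $\widehat{p}$, the observation that the super-level set of the convex map $q \mapsto \mathrm{kl}(q,p)$ is a union of two tails $[0,q_-]\cup[q_+,1]$ with $\mathrm{kl}(q_\pm,p)=\epsilon$, and the Chernoff--Cram\'er bound $Pr(\widehat{p}\geq q_+)\leq \exp(-K\,\mathrm{kl}(q_+,p))$ (plus its mirror image) combine to give exactly the stated $2\exp(-K\epsilon)$. The paper offers no proof of this lemma --- it is imported verbatim from \citet{dembo2009ldp} (see also \citet{mardia2020concentration}) --- and your argument is precisely the standard derivation behind that citation, including the key point that the binary alphabet replaces the polynomial-in-$K$ type-counting prefactor of Sanov's theorem with the constant $2$.
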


We can now provide the proof of Corollary~\ref{thr:sample_complexity_P_tabular}.

\sampleComplexityPtabular*
\begin{proof}
    We aim to obtain the number of samples $K$ for which Algorithm~\ref{alg:learning_p} is guaranteed to return a Bayesian network estimate $\widehat{P}_\G$ such that $ Pr ( \| \widehat{P}_\G - P_\G \|_1 \geq \epsilon ) \leq \delta $ in a tabular MDP setting. We start by considering the KL divergence $d_{KL} ( \widehat{P}_\G || P_\G )$. Especially, we note
    \begin{align*}
        d_{KL} \big( \widehat{P}_\G || P_\G \big) 
        &= \sum_{X, Y}  \widehat{P}_\G (X, Y) \log \frac{\widehat{P}_\G (X, Y)}{P_\G (X, Y)} \\
        &= \sum_{X, Y}  \widehat{P}_\G (X, Y) \log \frac{ \prod_{j = 1}^{S} \widehat{P}_j (Y[j] | X[Z_j])}{\prod_{j = 1}^{S} P_j (Y[j] | X[Z_j])} \\
        &= \sum_{X, Y}  \widehat{P}_\G (X, Y) \sum_{j = 1}^{S} \log \frac{ \widehat{P}_j (Y[j] | X[Z_j])}{ P_j (Y[j] | X[Z_j])} = \sum_{j = 1}^{S} d_{KL} \big( \widehat{P}_j || P_j \big).
    \end{align*}
    Then, for any $\epsilon' > 0$ we can write
    \begin{align}
         Pr \Big( d_{KL} \big( \widehat{P}_\G || P_\G \big) \geq \epsilon' \Big) 
         &\leq Pr \bigg( \bigcup_{j = 1}^{S} d_{KL} \big( \widehat{P}_j || P_j \big) \geq \frac{\epsilon'}{S} \bigg) \label{eq:7} \\
         &\leq \sum_{j = 1}^{S} Pr \bigg( d_{KL} \big( \widehat{P}_j || P_j \big) \geq \frac{\epsilon'}{S} \bigg) \label{eq:8} \\
         &\leq \sum_{j = 1}^{S} Pr \bigg( \bigcup_{x \in [2]^{|Z_j|}} d_{KL} \big( \widehat{P}_j (\cdot | x) || P_j (\cdot | x) \big) \geq \frac{\epsilon'}{S 2^{|Z_j|}} \bigg) \label{eq:9} \\
         &\leq \sum_{j = 1}^{S} \sum_{x \in [2]^{|Z_j|}} Pr \bigg( d_{KL} \big( \widehat{P}_j (\cdot | x) || P_j (\cdot | x) \big) \geq \frac{\epsilon'}{S 2^{|Z_j|}} \bigg) \label{eq:10} \\
         &\leq \sum_{j = 1}^{S} \sum_{x \in [2]^{|Z_j|}} Pr \bigg( d_{KL} \big( \widehat{P}_j (\cdot | x) || P_j (\cdot | x) \big) \geq \frac{\epsilon'}{S 2^{Z}} \bigg), \label{eq:11}
    \end{align}
    in which we applied a first union bound to get~\eqref{eq:8} from~\eqref{eq:7}, a second union bound to get~\eqref{eq:10} from~\eqref{eq:9}, and Assumption~\ref{ass:sparseness} to bound $|Z_j|$ with $Z$ in~\eqref{eq:11}. We can now invoke Lemma~\ref{thr:KL_binary_concentration} to obtain the sample complexity $K''$ that guarantees $ Pr ( d_{KL} ( \widehat{P}_j (\cdot | x) || P_j (\cdot | x) ) \geq \epsilon'' ) \leq \delta'' $, \ie
    \begin{equation*}
        K'' = \frac{ \log(2 / \delta'')}{\epsilon''} = \frac{ S 2^Z \log(2 S 2^Z / \delta')}{ \epsilon'},
    \end{equation*}
    where we let $\epsilon'' = \frac{\epsilon'}{S 2^Z}$, and $\delta'' = \frac{\delta'}{S 2^Z}$ for any choice of $\delta' \in (0, 1)$. By summing $K''$ for any $x \in [2]^{|Z_j|}$ and and $j \in [S]$, we obtain the sample complexity $K'$ that guarantees $ Pr ( d_{KL} ( \widehat{P}_\G || P_\G ) \geq \epsilon' ) \leq \delta' $, \ie
    \begin{equation}
        K' = \sum_{j = 1}^{S} \sum_{x \in [2]^{|Z_j|}} K'' \leq \frac{ S^2 2^{2Z} \log(2 S 2^Z / \delta')}{ \epsilon'}. \label{eq:12}
    \end{equation}
    Finally, we employ the Pinsker's inequality $\| \widehat{P}_\G - P_\G \|_1 \leq \sqrt{ 2 d_{KL} ( \widehat{P}_\G || P_\G) }$~\cite{csiszar1967information} to write
    \begin{equation*}
        Pr \Big( d_{KL} ( \widehat{P}_\G || P_\G) \geq \epsilon' \Big) 
        = Pr \Big( \sqrt{ 2 d_{KL} ( \widehat{P}_\G || P_\G)} \geq \sqrt{ 2 \epsilon'} \Big) 
        \geq Pr \Big( \| \widehat{P}_\G - P_\G \|_1 \geq \sqrt{ 2 \epsilon'} \Big),
    \end{equation*}
    which gives the sample complexity $K$ that guarantees $ Pr ( \| \widehat{P}_\G - P_\G \|_1 \geq \epsilon ) \leq \delta $ by letting $\epsilon' = \frac{\epsilon^2}{2}$ and $\delta' = \delta$ in~\eqref{eq:12}, \ie
    \begin{equation*}
        K = \frac{ 2 S^2 2^{2Z} \log( 2 S 2^Z / \delta)}{ \epsilon^2 }.
    \end{equation*}
\end{proof}

\section{Numerical Validation}
\label{appendix:numerical validation}
Let $A$, $W$, $P$, $S$, $D$, $C$ and $St$ represent academic performance, weight, physical activity, sleep, diet and study respectively. We start by defining the causal transition model as follows:

\begin{align*}
    A &\sim \mathcal{N}(\mu_A, \sigma)
    \\
    W &\sim \mathcal{N}(\mu_W, \sigma)
    \\
    \mu_A &= A + 0.2 D + 0.5 S + 0.8 St - 0.8
    \\
    \mu_W &= W - 0.5 D - 0.5 P + 1
    \\
    \sigma &= 0.1
\end{align*}

The transition model for a specific environment is then generated by adding independent white noise to each coefficient in the above equations for $\mu_A$ and $\mu_W$, including the ones not shown because set to 0:

\begin{align*}
    \mu_A &= (1+\epsilon_1) A + \epsilon_2 W + \epsilon_3 P + (0.2 + \epsilon_4) D + (0.5 + \epsilon_5) S + \epsilon_6 C + (0.8+\epsilon_7) St - 0.8
    \\
    \mu_W &= \epsilon_8 A + (1+\epsilon_9) W + (-0.5 + \epsilon_{10}) P + (-0.5 + \epsilon_{11}) D + \epsilon_{12} S + \epsilon_{13} C + \epsilon_{14} St + 0.8
    \\
    \epsilon_i &\sim \mathcal{N}(0, 0.1) \quad \forall i \in \{1\ldots 14\}
\end{align*}

\begin{figure}[H]
    \centering
    \includegraphics[width=0.5\linewidth]{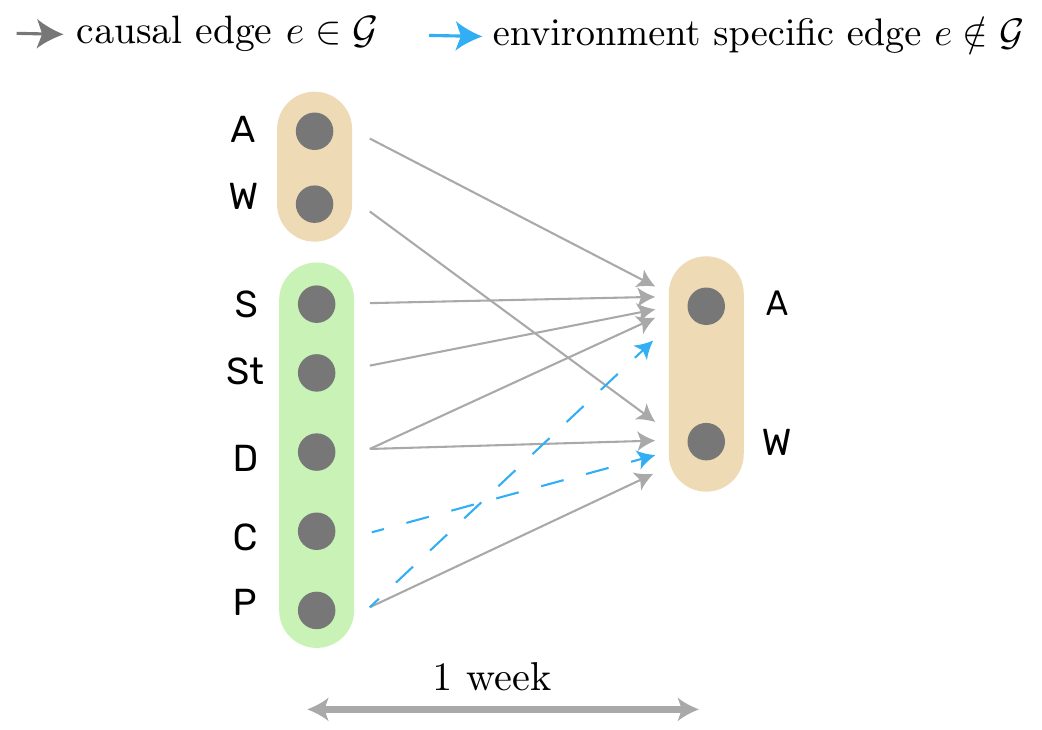}
    \caption{Bayesian network for a specific environment (person) of our synthetic example. Grey edges are causal and therefore shared by other environments in the same universe $\uni$, while blue dashed edges are environment-specific dependencies. }
    \label{fig:illustrative_ex}
\end{figure}

\end{document}